\newtheorem{theorem}{Theorem}
\newtheorem{lemma}{Lemma} 
\newtheorem{fact}{Fact} 
\theoremstyle{definition}
\newcommand{\OO}{\mathcal{O}}
\newcommand{\R}{\mathbb{R}}
\newcommand{\pP}{\mathbb{P}}
\newcommand{\E}{\mathbb{E}}
\newcommand{\algo}{\mathrm{ALGO}}
\newcommand{\opt}{\mathrm{OPT}}
\newcommand{\Reg}{\mathrm{Regret}}
\newcommand{\M}{\mathcal{M}}
 \DeclareMathOperator*{\argmax}{arg\,max}
\icmltitlerunning{Refined Analysis of FPL for Adversarial Markov Decision Processes}
\begin{document}

\twocolumn[
\icmltitle{
           Refined Analysis of FPL for Adversarial Markov Decision Processes}



\icmlsetsymbol{equal}{*}

\begin{icmlauthorlist}
\icmlauthor{Yuanhao Wang}{to}
\icmlauthor{Kefan Dong}{to}
\end{icmlauthorlist}

\icmlaffiliation{to}{Institute for Interdisciplinary Information Sciences, Tsinghua University}

\icmlcorrespondingauthor{Yuanhao Wang}{abrowndwarf@gmail.com}

\icmlkeywords{Machine Learning, ICML}

\vskip 0.3in
]



\printAffiliationsAndNotice{}  

\begin{abstract}
We consider the adversarial Markov Decision Process (MDP) problem, where the rewards for the MDP can be adversarially chosen, and the transition function can be either known or unknown. In both settings, Follow-the-Perturbed-Leader (FPL) based algorithms have been proposed in previous literature. However, the established regret bounds for FPL based algorithms are worse than algorithms based on mirror-descent. We improve the analysis of FPL based algorithms in both settings, matching the current best regret bounds using faster and simpler algorithms.
\end{abstract}

\section{Introduction}
\vspace{-0.1cm}
Markov Decision Processes (MDPs) are widely used to model reinforcement learning problems. Normally the reward is assumed to be stochastic and stationary, which does not capture nonstationary or adversarial environments. Recently, there is a surge of interest in studying the adversarial MDP problem~\cite{even2009online}.
There are several formulations for this problem, differing in whether the transition is known to the agent, and how the reward function is revealed. In the full information feedback setting, the reward vector is revealed to the agent at the end of each episode~\cite{even2009online, zimin2013online, neu2012adversarial, rosenberg2019online}, and in the bandit feedback setting, the agent can only observe the reward along the trajectory~\cite{rosenberg2019online, jin2019learning}. In this work, we focus on the full information feedback with both known and unknown transition. 

Roughly speaking, there are mainly two approaches to the adversarial MDPs problem, namely algorithms based on Follow-the-Perturbed-Leader (FPL)~\cite{even2009online,neu2012adversarial}, and algorithms based on mirror descent~\cite{zimin2013online,rosenberg2019online,jin2019learning}. Compared to mirror descent based algorithms, FPL has the advantage that it is conceptually simpler, easier to implement and runs faster. However, current state-of-the-art regret bounds are achieved by mirror descent based algorithms (see Table~\ref{tab:summary}).

In this work, we refine the analysis of FPL in two settings (known transition and unknown transition) by leveraging a simple observation. We show that for full information feedback adversarial MDPs, FPL-based algorithms are able to achieve the same state-of-the-art regret bounds as those of mirror descent algorithms (see Table~\ref{tab:summary}).


\renewcommand{\arraystretch}{1.5}
    \begin{table*}[htb]
        \centering
        \begin{tabular}{|c|c|c|c|}
        \hline
            & FPL based &  {FPL-refined (ours)} & Mirror descent based \\ \hline
        Known transition     &  $\Tilde{O}(\sqrt{SAT}$)~\cite{even2009online} & {$\Tilde{O}(\sqrt{T})$} & $\Tilde{O}(\sqrt{T})$~\cite{zimin2013online} \\ \hline
        Unknown transition     & $\Tilde{O}(SA\sqrt{T}$)~\cite{neu2012adversarial} & {$\Tilde{O}(S\sqrt{AT})$} & $\Tilde{O}(S\sqrt{AT})$~\cite{rosenberg2019online} \\ \hline
        \end{tabular}
        \vspace{5pt}
        \caption{Summary of regret bounds in two settings. Note that FPL-based algorithms are able to achieve the same state of the art regret bounds as those of mirror descent algorithms.}
        \label{tab:summary}
    \end{table*}


\section{Preliminaries}
\vspace{-0.1cm}
\textbf{Markov Decision Process and RL}
A finite horizon Markov Decision Process (MDP) $\M$ is defined by a five tuple  $\langle \mathcal{S}, \mathcal{A} , p, r , H \rangle$, where $\mathcal{S}$ is the state space, $\mathcal{A}$ is the action space, $p(s'|s,a)$ is the transition function, $r: \mathcal{S} \times \mathcal{A}\times H\to [0,1]$ is the deterministic reward function, and $H$ is the horizon length. Let $S=|\mathcal{S}|$ and $A=|\mathcal{A}|$ denote the number of states and the number of actions respectively.

In an episodic reinforcement learning task, the agent interacts with the environment for $T$ episodes. In the $t$-th episode, the agent starts from the initial state $s_1^t$; at state $s_h^t$, the agent chooses an action $a_h^t$, observes the reward $r_h^t$ and transits to the next state $s_{h+1}^t$. After $H$ steps, the episode ends, and the agent proceeds to the next episode.

A policy refers to a mapping from $\mathcal{S}\times [H]$ to $\mathcal{A}$. The value function of a policy $\pi$ is defined as
\begin{align*}
	V^{\pi}_h(s):=\E\left[\left.\sum_{h'=h}^{H}r(s_{h'},\pi(s_{h'},h'),h)\right|s_h=s\right].
\end{align*}
We use $\pi^*$ to denote the optimal policy, and $V^*_h$ to denote the value function of the optimal policy. The action-value function is defined as
\begin{align*}
&Q^{\pi}_h(s,a):=r(s,a)+\\&\E\left[\left.\sum_{h'=h+1}^{H}r(s_{h'},\pi(s_{h'},{h'}),,h)\right|s_h=s,a_h=a\right].
\end{align*}
Similarly, $Q^*_h$ denotes the action-value function of the optimal policy. For the standard stationary reinforcement learning, regret is defined as $\Reg(T):=\sum_{t=1}^T \left[V_1^{*}(s_1^t)-V_1^{\pi_t}(s_1^t)\right].$ The initial state can be either fixed or arbitrarily chosen~\cite{jin2018q}.

\textbf{Adversarial MDP}
In the adversarial MDP problem, the reward function $r$ for each episode can be different and is adversarially chosen. In particular, in the $t$-th episode, the reward function is $r_t$. We assume that in the end of the $t$-th episode, the complete reward function $r_t$ is revealed to the agent. We define regret as
\begin{equation*}
    \Reg(T):=\max_{\pi}\sum_{t=1}^T V_1^\pi(s_1^t,r_t)-\E\left[\sum_{t=1}^TV_1^{\pi_t}(s_1^t,r_t)\right],
\end{equation*}
where we use $V^{\pi}_h(s,r)$ to denote the value of policy $\pi$ starting from state $s$ at layer $h$ under the reward vector $r$.

Without loss of generality, we assume $s_1^{t}=s_1$ for all $1\le T\le T.$ The notation $V^{\pi}(r)$ is used as a shorthand for $V_1^{\pi}(s_1,r)$. We consider two setting for this problem. In the first setting, we assume that the transition function $p(\cdot|\cdot,\cdot)$ is known. In this case, the problem resembles more closely the expert problem.
In the second setting, the transition $p(\cdot|\cdot,\cdot)$ is unknown but fixed. In this case, the poblem resembles more closely the usual reinforcement learning problem.


\textbf{Notations} Let $r_{1:t}:=\sum_{\tau=1}^{t}r_\tau$ be the summation of the reward function from episode $1$ to $t$. For simplicity we define $(\pP f)(s,a):=\E_{s'\sim p(\cdot\mid s,a)}f(s')$. Let $\opt:=\max_{\pi}V^{\pi}(r_{1:T})$ be the total reward of the optimal policy in hindsight, and $\algo:=\E[\sum_{t=1}^{T}V^{\pi_t}(r_t)]$ be the expected total reward that the algorithm collects. By $\text{Exp}(\eta)$, we mean an exponential distribution with mean $1/\eta$. In other words, the density function is $p(x)=\eta e^{\eta x}\mathbb{I}[x>0]$. 

\vspace{-0.1cm}
\section{FPL for Known Transition}
\vspace{-0.1cm}
FPL is originally proposed as an algorithm for efficient online linear optimization~\cite{kalai2003efficient}. In the adversarial MDP problem where the transition is known, FPL can be applied directly~\cite{even2009online}. In the beginning, we sample $r_0$, a random reward function i.i.d. from $\text{Exp}(\eta)$. Then, in episode $t$, we compute $\pi_t$ as the optimal policy on $r_{0:t-1}$, and play $\pi_t$.

\begin{algorithm}
    \caption{FPL for Adversarial MDP~\cite{even2009online}}
        \begin{algorithmic}
        \STATE Sample $r_0\in \R^{SAH}$ i.i.d. from $\text{Exp}(\eta)$
        \FOR{$t=1,\cdots,T$}
            \FOR{$h=H,\cdots,1$}
                \STATE $Q_h(s,a)\gets r_{0:t-1}(s,a,h)+\pP V_{h+1}(s,a)$, $\forall s,a$
                \STATE $V_h(s)\gets \max_a Q_h(s,a)$
                \STATE $\pi_t(s,h)\gets\arg\max_a Q_h(s,a)$
            \ENDFOR
            \STATE Play $\pi_t$ in this episode, observe $r_t$
        \ENDFOR
        \end{algorithmic}
\end{algorithm}

The original analysis in~\cite{even2009online} gives an $\OO(H\sqrt{SAT})$ regret bound, which has polynomial dependence on the number of states and actions. Our contribution is a refined analysis of the same algorithm, improving the dependence on $S$ and $A$ to $\sqrt{\log(SA)}$, which is optimal.

\begin{theorem}
The regret of Algorithm 1 is bounded by
$$\E[\Reg(T)]\le \OO\left(H^2\sqrt{\log(SA)T}\right).$$
\end{theorem}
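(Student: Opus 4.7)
The plan is to follow the classical Be-the-Leader (BTL) template for FPL regret analysis and then sharpen the stability term by exploiting the memoryless property of $\text{Exp}(\eta)$ at each state--layer. Viewing the value function as a linear functional $V^{\pi}(r)=\langle q^{\pi},r\rangle$ on occupancy measures (with $\|q^{\pi}\|_{1}=H$), Algorithm~1 is exactly FPL over the occupancy polytope with i.i.d.\ $\text{Exp}(\eta)$ coordinate perturbations. Applying BTL to the augmented sequence $r_0,r_1,\ldots,r_T$ and rearranging yields
\[
\E[\Reg(T)]\le\underbrace{\E\!\left[V^{\pi_1}(r_0)-V^{\pi^{*}}(r_0)\right]}_{\text{bias}}+\sum_{t=1}^{T}\underbrace{\E\!\left[V^{\pi_{t+1}}(r_t)-V^{\pi_{t}}(r_t)\right]}_{\text{stability}}.
\]
The bias is handled by the layer-wise bound $V^{\pi_{1}}(r_0)\le\sum_{h}\max_{s,a}r_0(s,a,h)$ (each one-step reward along any trajectory is at most the layer maximum), combined with $\E[\max_{i\in[n]}\text{Exp}(\eta)]\le (1+\ln n)/\eta$; this gives a bias bound of $O(H\log(SA)/\eta)$.

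The core step is the per-round stability, where I would use the following simple observation. Condition on every coordinate of $r_0$ except the block $r_0(s,\cdot,h)$; then Algorithm~1's action at $(s,h)$ is $\argmax_{a}\{r_0(s,a,h)+c_{t,a}\}$ for deterministic constants $c_{t,a}=r_{1:t-1}(s,a,h)+(\pP V^{t}_{h+1})(s,a)$ coming from value iteration above layer $h$, and similarly for $\pi_{t+1}$ with constants $c_{t+1,a}$ satisfying $\|c_{t+1}-c_{t}\|_{\infty}\le 1+\|V^{t+1}_{h+1}-V^{t}_{h+1}\|_{\infty}\le H-h+1$. The memoryless property of $\text{Exp}(\eta)$---equivalently, that the gap between the top two of $A$ i.i.d.\ exponentials is itself $\text{Exp}(\eta)$---then yields an $A$-free argmax-stability estimate
\[
\Pr[\pi_{t+1}(s,h)\ne\pi_{t}(s,h)\mid\cdot]\le 2\eta(H-h+1).
\]
Feeding this into the performance-difference lemma and handling the coupling between $\pi_{t+1}$'s occupancy $\rho^{\pi_{t+1}}_h(s)$ and the disagreement indicator produces the per-round bound $\E[V^{\pi_{t+1}}(r_t)-V^{\pi_{t}}(r_t)]=O(\eta H^{3})$, so that the stability sum is $O(\eta H^{3}T)$.

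Balancing the two terms with $\eta=\Theta(\sqrt{\log(SA)/(H^{2}T)})$ gives the claimed $\E[\Reg(T)]=O(H^{2}\sqrt{\log(SA)\,T})$. The hard part will be the stability argument: the constants $c_{t,a}$ involve $V^{t}_{h+1}$, which itself depends on the conditioned-on part of $r_0$, so a clean layer-by-layer induction is needed to simultaneously maintain the deterministic $(H-h+1)$ bound on $\|V^{t+1}_{h+1}-V^{t}_{h+1}\|_{\infty}$ and the $A$-free argmax smoothness; and---perhaps most delicate---one must avoid a spurious $S$ factor when passing from per-state disagreement probabilities to the occupancy-weighted stability, which requires using $\sum_{s}\rho^{\pi_{t+1}}_{h}(s)=1$ together with a careful decorrelation (Cauchy--Schwarz or coupling) between $\rho^{\pi_{t+1}}_h$ and the disagreement event.
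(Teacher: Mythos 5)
Your skeleton and your key insight both match the paper's. The Be-the-Leader decomposition with bias term $\E[V^{\pi_1}(r_0)]\le H(1+\ln(SA))/\eta$ is exactly the paper's Lemma 1, and your central observation---condition on all of $r_0$ except the block $r_0(s,\cdot,h)$, note that the action at $(s,h)$ is an argmax of exponentials shifted by constants that move by at most $H-h+1$ between rounds $t$ and $t+1$, and invoke memorylessness to get an $A$-free and $S$-free stability estimate---is precisely the paper's Lemma 2 idea. Your per-state disagreement bound $\Pr[\pi_{t+1}(s,h)\ne\pi_t(s,h)\mid\cdot]\le 2\eta(H-h+1)$ is correct (the excess of the winning coordinate over the runner-up threshold is again $\mathrm{Exp}(\eta)$).

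The genuine gap is the step you yourself flag as delicate, and the tools you name would not close it. Writing the performance-difference bound as $H\sum_h\sum_s\E\bigl[\rho^{\pi_{t+1}}_h(s)\,\mathbb{I}[\pi_{t+1}(s,h)\ne\pi_t(s,h)]\bigr]$, the occupancy $\rho^{\pi_{t+1}}_h(s)$ and the disagreement indicator are both functions of $r_0$ and are genuinely correlated: the indicator depends on $r_0(s,\cdot,h)$ and $r_0(\cdot,\cdot,h+1{:}H)$, while $\pi_{t+1}$ at layers below $h$ depends on $V_{h'}$ and hence on those same coordinates, so no conditioning cleanly separates them. Cauchy--Schwarz is strictly worse than useless here: it yields $\sum_s\sqrt{\E[\rho_h(s)^2]}\cdot\sqrt{\Pr[\text{disagree}]}\lesssim\sqrt{S}\cdot\sqrt{\eta H}$ per layer, degrading both the $\eta$-dependence (to $\sqrt{\eta}$, which ruins the final $\sqrt{T}$ rate) and reintroducing a $\sqrt{S}$ factor. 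The paper avoids this aggregation problem entirely by upgrading the per-state statement to a \emph{multiplicative} one, $\Pr[\pi_t(s,h)=a]/\Pr[\pi_{t+1}(s,h)=a]\in[e^{-\eta(H-h+1)},e^{\eta(H-h+1)}]$, and multiplying these ratios along a trajectory to conclude $\Pr_{\pi_t}[\tau]/\Pr_{\pi_{t+1}}[\tau]\in[e^{-\eta H^2},e^{\eta H^2}]$, whence $\E[V^{\pi_{t+1}}(r_t)]\le e^{\eta H^2}\E[V^{\pi_t}(r_t)]$ and the loss is $(e^{\eta H^2}-1)\opt\le O(\eta H^3 T)$ with no decorrelation needed. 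If you want to salvage your additive route, you would essentially need to prove that the disagreement probability remains $O(\eta H)$ even conditioned on reaching $(s,h)$ under $\pi_{t+1}$---which again amounts to a multiplicative change-of-measure argument. I recommend switching to the ratio formulation; everything else in your outline then goes through as you describe.
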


\vspace{-0.4cm}

The proof for the theorem comes in two parts. First, as in the original analysis~\cite{even2009online}, we have a lemma commonly referred to as the ``Be-the-leader lemma'' in literature, which says that if we allow the algorithm to peek one step ahead, the regret compared to the best policy in hindsight would be small.
\begin{lemma}
\label{lem:betheleader}
$\E\left[\sum_{t=1}^T V^{\pi_{t+1}}(r_t)\right] \ge \opt-\frac{H+H\ln (SA)}{\eta}.$
\end{lemma}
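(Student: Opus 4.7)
The plan is to prove the lemma by combining a deterministic ``Be-the-Leader'' style induction with an expected-value estimate for the random perturbation $r_0$. The key structural fact I would use is that $V^\pi(r)$ is linear in the reward vector $r\in\R^{SAH}$: if $d^\pi(s,a,h)$ denotes the occupancy measure of $\pi$ at layer $h$, then $V^\pi(r)=\sum_{s,a,h}d^\pi(s,a,h)\,r(s,a,h)$, so in particular $V^\pi(r+r')=V^\pi(r)+V^\pi(r')$.

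First I would establish, by induction on $t$, the deterministic inequality
\[
\sum_{\tau=0}^{t} V^{\pi_{\tau+1}}(r_\tau)\ \ge\ \max_\pi V^\pi(r_{0:t}).
\]
The base case $t=0$ holds with equality by the definition $\pi_1=\argmax_\pi V^\pi(r_0)$. For the inductive step, the hypothesis at $t-1$ yields that the first $t$ summands are at least $\max_\pi V^\pi(r_{0:t-1})\ge V^{\pi_{t+1}}(r_{0:t-1})$, where the last inequality just specializes the max to $\pi=\pi_{t+1}$. Adding $V^{\pi_{t+1}}(r_t)$ and using linearity gives $V^{\pi_{t+1}}(r_{0:t})$, which equals $\max_\pi V^\pi(r_{0:t})$ by the definition of $\pi_{t+1}$.

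Setting $t=T$ and using that $r_0$ is componentwise nonnegative (each coordinate is drawn from $\text{Exp}(\eta)$), we have $V^\pi(r_0)\ge 0$ for every $\pi$, so $\max_\pi V^\pi(r_{0:T})\ge \max_\pi V^\pi(r_{1:T})=\opt$. Thus, for every realization of $r_0$,
\[
\sum_{t=1}^T V^{\pi_{t+1}}(r_t)\ \ge\ \opt - V^{\pi_1}(r_0).
\]
Taking expectation over $r_0$, the lemma reduces to showing $\E[V^{\pi_1}(r_0)]\le \frac{H+H\ln(SA)}{\eta}$.

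For this last estimate I would decompose the perturbation layerwise: any policy $\pi$ satisfies
\[
V^\pi(r_0)=\sum_{h=1}^{H}\E_\pi\!\left[r_0(s_h,a_h,h)\right]\ \le\ \sum_{h=1}^{H}\max_{s,a}r_0(s,a,h),
\]
and this bound survives the maximization over $\pi$ on the left. For each fixed $h$, the $SA$ random variables $\{r_0(s,a,h)\}_{s,a}$ are iid $\text{Exp}(\eta)$, whose maximum has expectation $\frac{1}{\eta}\sum_{k=1}^{SA}\tfrac{1}{k}\le \frac{1+\ln(SA)}{\eta}$. Summing over the $H$ layers yields the stated bound. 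There is no serious obstacle; the only subtlety worth flagging is this layerwise decomposition, which delivers $\ln(SA)$ rather than the $\ln(SAH)$ that a naive global-max argument would produce, and is what lets the bound match the constant in the statement.
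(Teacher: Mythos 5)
Your proposal is correct and follows essentially the same route as the paper: the inductive ``be-the-leader'' inequality you prove is exactly the paper's telescoping argument unrolled, both reduce the lemma to bounding $\E\left[V^{\pi_1}(r_0)\right]$, and both use the same layerwise decomposition with the expected maximum of $SA$ i.i.d.\ $\text{Exp}(\eta)$ variables (the paper via a union-bound integral, you via the harmonic-sum identity) to get $H(1+\ln(SA))/\eta$. No gaps.
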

\vspace{-0.1cm}
The second step is to show that peeking one step into the future does not make a large difference, since $r_0$ introduces enough randomness to ``blur'' the difference. This is also the key step where we improve the original analysis. In \cite{even2009online}, this is shown by bounding the ratio between the density function of $r_{0:t}$ and $r_{0:t-1}$, which is of the order $\exp(\eta\Vert r_t\Vert_1)$. Since $r_t$ is $SAH$ dimensional, this leads to a suboptimal bound of 
$\E\left[V^{\pi_{t+1}}(s,r_t)\right]\le e^{\eta SAH}\E\left[V^{\pi_{t}}(s,r_t)\right]$.

Our key observation is that, we are only interested in the optimal policy computed on $r_{0:t-1}$ and $r_{0:t}$. The optimal policy can be computed using value iteration, which is a structured optimization process. By showing that value iteration is ``stable'', we can remove the dependence on $SA$. In particular, we show that
\begin{lemma}
\label{lem:stable}
$\E\left[V^{\pi_{t+1}}(s,r_t)\right]\le e^{\eta H^2}\E\left[V^{\pi_{t}}(s,r_t)\right]$.
\end{lemma}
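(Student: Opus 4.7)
My plan is to combine a change-of-variables argument with a backward induction on layers that exploits value iteration's additive structure. The central identity is $\pi_{t+1}(r_0) = \pi_t(r_0 + r_t)$: value iteration treats the perturbation $r_0$ and cumulative rewards $r_{1:t}$ additively, so running it with perturbation $r_0$ on reward prefix $r_{0:t}$ is the same as running it with perturbation $r_0 + r_t$ on prefix $r_{0:t-1}$. Thus $\E[V^{\pi_{t+1}(r_0)}(s,r_t)] = \E[V^{\pi_t(r_0+r_t)}(s,r_t)]$; substituting $r_0' = r_0 + r_t$ leaves an integral whose naive pointwise density ratio $e^{\eta\|r_t\|_1} \le e^{\eta SAH}$ must be tightened.

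The refinement rests on a single-step FPL stability lemma: for a one-shot argmax $\arg\max_a(Q(a)+r_0(a))$ with i.i.d.\ $r_0\sim\text{Exp}(\eta)^A$ and any non-negative shift $u\in[0,U]^A$, the probability of any fixed action being chosen changes by at most $e^{\eta U}$. The improvement from $\|u\|_1$ to $\|u\|_\infty=U$ is the key observation: the exponential argmax depends on $r_0$ only through the pairwise differences $r_0(a^*)-r_0(a)$, so after conditioning on $r_0(a^*)$ the shift can be absorbed by translating only that one coordinate, yielding a single-coordinate density factor of $e^{\eta u(a^*)} \le e^{\eta U}$. The same factor bounds $\E[f(\arg\max(Q+u+r_0))]$ against $\E[f(\arg\max(Q+r_0))]$ for every non-negative $f$.

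Using this as the inner step, I would induct backward on $h = H, \dots, 1$ to prove $\E[V_h^{\pi_{t+1}}(s,r_t)] \le e^{\eta M(h)}\E[V_h^{\pi_t}(s,r_t)]$ where $M(h) = \sum_{h'\ge h}(H-h'+1)$, so $M(1) = H(H+1)/2 \le H^2$. A preliminary backward induction using $\|r_t\|_\infty\le 1$ first gives $V_h^{t+1}(s) - V_h^t(s) \le H-h+1$, hence the effective layer-$h$ Q-shift $u_h(s,a) := r_t(s,a,h) + \pP(V_{h+1}^{t+1}-V_{h+1}^t)(s,a)$ lies in $[0,H-h+1]$. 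At the inductive step, conditioning on the upper-layer perturbations $r_0(\cdot,\cdot,h{+}1{:}H)$ fixes $V_{h+1}^{t+1}, V_{h+1}^t$ and all continuation values, so the layer-$h$ argmax at each state becomes a clean one-shot FPL problem with shift of $\ell_\infty$-norm at most $H-h+1$. Applying the stability lemma there with $f$ set to the continuation-value functional, and chaining with the inductive hypothesis at $h+1$, yields the layer-$h$ bound.

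The main obstacle is the conditioning bookkeeping: the continuation value $V_{h+1}^{\pi_{t+1}}(\cdot,r_t)$ depends on the upper-layer perturbations, so the inductive hypothesis must be invoked inside a conditional expectation while the single-step stability lemma is applied in the inner expectation over $r_0(\cdot,\cdot,h)$. Verifying that the per-state single-step factors compose into an overall layer-$h$ factor of exactly $e^{\eta(H-h+1)}$---without picking up additional dependence on $S$ or $A$---is the crucial technical step I expect to spend most effort on.
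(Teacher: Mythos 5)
Your proposal is built on exactly the paper's key observation: conditionally on $r_0(s,a',h)$ for $a'\neq a$ and on the layer-$(h{+}1{:}H)$ perturbations, the event $\{\pi(s,h)=a\}$ is a threshold event for the single exponential variable $r_0(s,a,h)$; adding $r_t$ moves the threshold by at most $H-h+1$, and the $\eta$-Lipschitzness of $\ln(1-F)$ gives $\Pr[\pi_{t+1}(s,h)=a]/\Pr[\pi_t(s,h)=a]\in[e^{-\eta(H-h+1)},e^{\eta(H-h+1)}]$, with the exponents summing to at most $H^2$. Up to the composition step this is the paper's argument verbatim. There you diverge: you compose by backward induction on $\E[V_h^{\pi}(s,r_t)]$, whereas the paper composes at the level of trajectory probabilities, bounding the ratio $\Pr_{\pi_t}[s_1,a_1,\dots,s_H,a_H]/\Pr_{\pi_{t+1}}[s_1,a_1,\dots,s_H,a_H]$ by $e^{\pm\eta H^2}$ and then using nonnegativity of the accumulated reward. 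The paper's route is the cleaner resolution of precisely the ``conditioning bookkeeping'' you flag as your main obstacle: in your induction, the layer-$h$ action under $\pi_t$ and the continuation value $V_{h+1}^{\pi_{t+1}}(\cdot,r_t)$ are both functions of $r_0(\cdot,\cdot,h{+}1{:}H)$, so the inductive hypothesis---an unconditional expectation bound---cannot be invoked inside the inner expectation where it multiplies the indicator of the layer-$h$ action; making this rigorous forces a likelihood-ratio statement on the joint law of the actions at layers $h$ through $H$, which is exactly the trajectory-ratio form. One small slip in your single-step lemma: you should condition on the coordinates $r_0(a)$ for $a\neq a^*$, leaving a threshold event on the single remaining coordinate $r_0(a^*)$, rather than conditioning on $r_0(a^*)$ itself; the claimed factor $e^{\eta\|u\|_\infty}$ is nevertheless correct. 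Otherwise the two arguments are the same.
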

\vspace{-0.1cm}
We now give a sketch proof of Lemma~\ref{lem:stable}.

For simplicity, we use $r_0(s,-a,h)$ to denote the set of random variables $\{r_0(s,a',h):a'\neq a, a'\in\mathcal{A}\}$, and $r_0(s,\cdot,h)$ the set of random variables $\{r_0(s,a,h):a\in\mathcal{A}\}$. Observe that $Q_h(s,a')$ and $V_{h+1}^{\pi_t}(s,r_{0:t-1})$ does not depend on $r_0(s,a,h)$ for $a'\neq a$.

Since $\pi_t$ is the optimal policy on $r_{0:t-1}$, $\pi_t(s,h)=\argmax_{a}\left\{Q_h(s,a)\right\}$. Thus, $\pi_t(s,h)=a$ is equivalent to the event that $r_0(s,a,h)>\max_{a'\neq a}Q_h(s,a')-r_{1:t-1}(s,a,h)-\pP V_{h+1}^{\pi_t}(s,a)$. Let us compare this event with the counterpart for $\pi_{t+1}$, which is $r_0(s,a,h)>\max_{a'\neq a}Q_h(s,a')-r_{1:t}(s,a,h)-\pP V_{h+1}^{\pi_{t+1}}(s,a)$. We can see that if we fix $r_0(s,-a,h)$ and $r_0(s,a,h')$ for $h+1\le h'\le H$, on the left hand side we have the same exponentially distributed random variable, and on the right hand side we have two constants that differ by at most $H-h+1$. Consequently,
\begin{align*}
    e^{-\eta (H-h+1)}\le \frac{\Pr\left[\pi_t(s,h)=a\right]}{\Pr\left[\pi_{t+1}(s,h)=a\right]} \le e^{\eta (H-h+1)}.
\end{align*}
This crucial fact suggests that the resulting policy of value iteration is ``stable''. As a result, under $\pi_t$ and $\pi_{t+1}$, the probability of experiencing a trajectory $s_1,a_1,\cdots,s_H,a_H$ is also close. Specifically,
\begin{align*}
\frac{\Pr_{\pi_t}\left[s_1,a_1,\cdots,s_H,a_H\right]}{\Pr_{\pi_{t+1}}\left[s_1,a_1,\cdots,s_H,a_H\right]}
&=\prod_{h=1}^H\frac{\Pr\left[\pi_t(s_h,h)=a_h\right]}{\Pr\left[\pi_{t+1}(s_h,h)=a_h\right]}\\
&\in \left[e^{-\eta H^2},e^{\eta H^2}\right].
\end{align*}
Since the total obtained reward is a function of the experienced trajectory, it naturally follows that
\begin{align*}
    \E\left[V^{\pi_{t}}(s,r_t)\right]\ge e^{-\eta H^2}\E\left[V^{\pi_{t+1}}(s,r_t)\right].
\end{align*}
We now see how Lemma~\ref{lem:stable} leads to the improved regret bound. By combining Lemma~\ref{lem:stable} with Lemma~\ref{lem:betheleader}, we get
\begin{equation*}
    \begin{aligned}
        \E\left[\sum_{t=1}^T V^{\pi_t}(r_t)\right] &\ge  e^{-\eta H^2}\E\left[\sum_{t=1}^T V^{\pi_{t+1}}(r_t)\right]\\
    &\ge e^{-\eta H^2}\left(\opt - \frac{H+H\ln(SA)}{\eta}\right)\\
    &\ge \opt - \eta H^2 \opt - \frac{H+H\ln(SA)}{\eta}\\
    &\ge \opt - \eta H^3T - \frac{H+H\ln(SA)}{\eta}.
    \end{aligned}
\end{equation*}
By choosing $\eta=\sqrt{\frac{1+\ln(SA)}{H^2T}}$, we get
\begin{equation*}
    \begin{aligned}
    \Reg(T)&=\opt- \E\left[\sum_{t=1}^T V^{\pi_t}(r_t)\right]\\
    &\le \eta H^3T+\frac{H+H\ln(SA)}{\eta}\\
    &\le 2H^2\sqrt{(1+\ln(SA))T},
\end{aligned}
\end{equation*}
which proves Theorem~1.

It is not hard to encode a expert problem with $SA$ experts and reward scale $[0,H]$ as an adversarial MDP problem~\cite{zimin2013online}. This gives a regret lower bound of $\Omega(H\sqrt{\ln(SA)T})$. Our bound for FPL matches the lower bound in terms of the dependence on $S$, $A$ and $T$, but is not tight in the dependence on $H$.

O-REPS, a mirror-descent based algorithm~\cite{zimin2013online}, achieves a $\OO(H\sqrt{\ln(SA)T})$ regret bound in this setting, which matches the lower bound. However, O-REPS runs much slower than FPL. In particular, the runtime for FPL is $\OO(S^2A)$ per episode. In contrast, in each episode O-REPS needs to solve a convex optimization problem with $S$ variables, where the objective function requires $\Omega(S^2A)$ time to evaluate once (either the function value or the gradient). Thus, if a standard first-order method is used to solve this optimization problem, the running time would be at least $\Omega(S^2A\times \text{(Gradient Complexity)})$. Clearly, FPL is more computationally efficient.

We would also like to remark that when $H=S=1$, the adversarial MDP problem is exactly the experts problem~\cite{cesa2006prediction}. Thus the proof of Theorem~1 also gives an alternative proof of the regret bound of FPL applied to experts problem (see Section B of the appendix).

\section{FPL for Unknown Transition}
\vspace{-0.1cm}
In the case where the transition of the MDP is unknown but fixed, \cite{neu2012adversarial} proposes the FPOP algorithm, which combines FPL with UCRL~\cite{jaksch2010near} and achieves a regret bound of $\tilde{\OO}\left(SAH\sqrt{HT}\right)$. By leveraging our observation about the stability of value iteration, we can improve the regret bound without changing the algorithm.

First, let us introduce some additional notations for clarity. We use $W(r,P,\pi,s)$ to denote the value function of policy $\pi$ under the MDP $(r,P)$ evaluated at state $s$. We use $N_t(s,a)$ to denote the number of times that a state-action pair $(s,a)$ is visited up to episode $t$, and $N_t(s,a,s')$ to denote the number of times that after visiting $(s,a)$, the next state is $s'$. We use $\bar{P}_t$ to denote the empirical estimate of the transition function. In particular, $\bar{P}_t(s'|s,a):=\frac{N_t(s,a,s')}{\max\{1,N_t(s,a)\}}$.

We now state the FPOP algorithm for completeness (Algorithm 2). Here the maximization in line $4$ is implemented using extended value iteration~\cite{jaksch2010near} (Algorithm 3).

\begin{algorithm}[t]
        \begin{algorithmic}[5]
            \STATE Initialize $i(1)=1$, $n_1(s,a)=0$, $N(s,a)=0$ and $M_1(s,a)=0$ for all $(s,a,h)$; initialize $\mathcal{P}_t$ as the set of all possible transitions
            \STATE Sample i.i.d. $r_0(s,a,h)\sim\text{Exp}(\eta)$ for all $(s,a,h)$
            \FOR{$t=1,\cdots,T$}
                \STATE Choose $(\pi_t,\tilde{P}_t)\gets \argmax_{\pi,P\in \mathcal{P}_t} W(r_{0:t-1},P,\pi)$
                \FOR{$h=1,\cdots,H$}
                    \STATE Observe state $s^t_h$, take action $a^t_h=\pi_t(s^t_h)$
                    \STATE $n_{i(t)}(s^t_{h},a^t_h)\gets n_{i(t)}(s^t_h,a^t_h)+1$
                \ENDFOR
                \STATE Update $N_t(s,a)$, $N_t(s,a,s')$ and $\bar{P}_t(s'|s,a)$ accordingly
                \IF{$n_{i(t)}(s,a)\ge N_t(s,a)$ for some $(s,a)$, start new epoch}
                    \STATE $i(t+1)=i(t)+1$; Compute $\bar{P}$, the empirical transition function
                    \STATE Update $\mathcal{P}$ as
                    \begin{align*}\mathcal{P}_{i(t)+1}\gets \mathcal{P}_{i(t)}\cap& \bigg\{P:\Vert P(\cdot|s,a)-\bar{P}(\cdot|s,a)\Vert_1 
                    \\
                    &\le  \sqrt{\frac{2S\ln\frac{SAT}{\delta}}{\max\{1,N_t(s,a)\}}},~~\forall s,a\bigg\}
                    \end{align*}
                    \STATE Reset $n_{i(t+1)}(s,a)\gets 0$; resample $r_0\sim\text{Exp}(\eta)$ 
                \ELSE
                    \STATE $i(t+1)=i(t)$
                \ENDIF
            \ENDFOR
        \end{algorithmic}
        \caption{FPOP Algorithm for Adversarial MDP with Unknown Transition~\cite{neu2012adversarial}}
\end{algorithm}


\begin{algorithm}
    \begin{algorithmic}
        \STATE Input: value function $r$, empirical estimate $\bar{P}$, counters $N(s,a)$
        \STATE Compute $b(s,a)\gets\sqrt{\frac{2S\ln\frac{SAT}{\delta}}{\max\{1,N(s,a)\}}}$
        \STATE Initialize $w_{H+1}(s)=0$ for all $s$
        \FOR{$h=H,\cdots,1$}
            \STATE Sort states into $(s^*_1,\cdots,s^*_S)$ in descending order of $w_{h+1}(\cdot)$
            \FOR{$s\in \mathcal{S}$, $a\in \mathcal{A}$}
                \STATE $P^*(s^*_1|s,a)\gets \min\{\bar{P}(s^*_1|s,a)+b(s,a)/2,1\}$
                \STATE $P^*(s^*_i|s,a)\gets \bar{P}(s^*_i|s,a)$ for $k=2,\cdots,S$
                \STATE $j\gets S$
                \WHILE{$\sum_i P^*(s^*_i|s,a)>1$}
                    \STATE $P^*(s^*_i|s,a)=\max\{0,1-\sum_{i\neq j}P^*(s^*_i|s,a)\}$
                    \STATE $j\gets j-1$
                \ENDWHILE
            \ENDFOR
            \FOR{$s\in\mathcal{S}$}
                \STATE $w_{h}(s)\gets \max_a \{r(s,a)+\sum_{s'}P^*(s'|s,a)w_{h+1}(s')\}$
                \STATE $\pi(s,h)\gets \argmax_a \{r(s,a)+\sum_{s'}P^*(s'|s,a)w_{h+1}(s')\}$
            \ENDFOR
        \ENDFOR
    \end{algorithmic}
    \caption{Extended Value Iteration~\cite{jaksch2010near}}
\end{algorithm}

We proceed to give a quick overview of the original anlaysis in~\cite{neu2012adversarial}.
Let $\tilde{v}_t:=W(r_t,\pi_t,\tilde{P}_t,s^t_{1})$ be the value of algorithm's policy on the optimistic transition; let $$(\hat{\pi}_t,\hat{P}_t)\gets \argmax_{\pi,P\in \mathcal{P}_{t}}\left\{W(r_{0:t},\pi,P)\right\},$$
and let $\hat{v}_t:=W(r_t,\hat{\pi}_t,\hat{P}_t, s^t_{1})$ be the optimistic value of the ``one-step lookahead'' policy.

Similar to the known transition case,~\cite{neu2012adversarial} also shows that allowing the algorithm to peek one step into the future doesn't make much difference by bounding the ratio between the density of $r_{0:t-1}$ and $r_{0:t}$. In particular, they prove the following lemma.

\begin{lemma}[Lemma 3 in~\cite{neu2012adversarial}]
\label{lem:stableucrl}
\begin{equation*}
\E\left[\sum_{t=1}^T \hat{v}_t\right] \le \E\left[\sum_{t=1}^T \tilde{v}_t\right]+(e-1)\eta SAH \cdot HT.
\end{equation*}
\end{lemma}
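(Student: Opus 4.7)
The plan is the standard density-ratio (change-of-measure) argument underlying FPL analyses, applied per episode. The central observation is a pointwise shift identity in the perturbation: fixing everything else in the $t$-th episode, the map $r_0 \mapsto r_0 + r_t$ turns the objective $W(r_{0:t-1},P,\pi)$ into $W(r_{0:t},P,\pi)$ while leaving the feasible set $\mathcal{P}_t$ and the initial state $s_1^t$ unchanged (both are measurable with respect to information available before the current epoch's $r_0$ was drawn). Therefore, viewed as functions of the perturbation, $\hat v_t(r_0)=\tilde v_t(r_0+r_t)$.

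Conditioning on the history $\mathcal F$ that determines $\mathcal P_t$, $r_{1:t}$, and $s_1^t$, I would next perform a change of variables on the inner expectation over $r_0\sim\text{Exp}(\eta)^{\otimes SAH}$, whose joint density is $f(x)=\prod_{s,a,h}\eta e^{-\eta x(s,a,h)}\mathbb{I}[x(s,a,h)\ge 0]$:
\begin{align*}
\E[\hat v_t\mid\mathcal F]
&=\int \tilde v_t(x+r_t)\,f(x)\,dx\\
&=\int_{y\ge r_t}\tilde v_t(y)\,f(y-r_t)\,dy.
\end{align*}
On the support $\{y\ge r_t\}$ the density ratio is $f(y-r_t)/f(y)=e^{\eta\Vert r_t\Vert_1}\le e^{\eta SAH}$ (using $r_t\in[0,1]^{SAH}$), so non-negativity of $\tilde v_t$ gives $\E[\hat v_t\mid\mathcal F]\le e^{\eta SAH}\,\E[\tilde v_t\mid\mathcal F]$. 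Combined with the elementary inequality $e^{\eta SAH}-1\le(e-1)\eta SAH$ (valid whenever $\eta SAH\le 1$, which the eventual tuning of $\eta$ will satisfy) and the trivial pointwise bound $\tilde v_t\le H$, this yields $\E[\hat v_t-\tilde v_t]\le(e-1)\eta SAH\cdot H$ per episode, and summing over $t$ and taking outer expectation by the tower rule produces the stated bound.

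The only mildly delicate point is verifying the independence used in the inner change of variables: within a single epoch, $r_0$ must be a fresh $\text{Exp}(\eta)^{\otimes SAH}$ sample independent of $\mathcal P_t$ and $s_1^t$. This is exactly how Algorithm 2 is arranged (it resamples $r_0$ at the start of each new epoch, immediately after the confidence set has been refreshed). Notice that the loose factor $SA$ in this bound arises solely from worst-casing $\Vert r_t\Vert_1\le SAH$ coordinate by coordinate; this is precisely the step that a stability-of-value-iteration refinement analogous to Lemma~\ref{lem:stable} is designed to replace in the rest of the paper.
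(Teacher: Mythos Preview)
Your proposal is correct and follows exactly the density--ratio approach that the paper attributes to \cite{neu2012adversarial}; note that the paper does not supply its own proof of this lemma but only describes the method as ``bounding the ratio between the density function of $r_{0:t}$ and $r_{0:t-1}$,'' which is precisely your change--of--variables step, and your closing remark that the loose $SA$ factor arises from worst--casing $\Vert r_t\Vert_1\le SAH$ matches the paper's narrative verbatim.

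One small caveat on the measurability point you flag: it is a bit more delicate than your justification suggests. Resampling $r_0$ at the start of each epoch makes $r_0^{(i)}$ independent of $\mathcal P_i$ and of $\tau_i$, but the event $\{i(t)=i\}$ also involves $\tau_{i+1}>t$, which depends on the trajectories \emph{within} epoch $i$ and hence on $r_0^{(i)}$ itself; thus conditioning on $\mathcal P_t$ (equivalently on the epoch index) does not leave the conditional law of the current perturbation exactly $\text{Exp}(\eta)^{\otimes SAH}$. This is a standard technicality in epoched FPL analyses and can be handled by organizing the sum over epochs and conditioning on the filtration at $\tau_i$ rather than on $i(t)$, without changing the final bound; your write--up would just need an extra line to make this airtight.
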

Next, $\E[\sum_{t=1}^T \tilde{v}_t]$ is bounded as in the analysis of UCRL.
\begin{lemma}[Lemma 5 in~\cite{neu2012adversarial}]
\label{lem:ucrl}
Assume that $T\ge HSA$ and set $\delta=1/(HT)$. Then
\begin{equation*}
    \E\left[\sum_{t=1}^T \tilde{v}_t\right]\le \E\left[\algo\right]+\tilde{\OO}\left(H^2S\sqrt{AT}\right).
\end{equation*}
\end{lemma}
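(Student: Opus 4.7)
The plan is to establish Lemma~\ref{lem:ucrl} via a standard UCRL-style optimism argument, with the perturbed reward $r_{0:t-1}$ driving $(\pi_t,\tilde P_t)$ but playing no other role. First I would define the good event $\mathcal{E}$ that the true transition $p$ belongs to every confidence set $\mathcal{P}_t$. Since each $\mathcal{P}_t$ is built from Weissman's $L_1$-concentration inequality on $S$-outcome empirical distributions, a union bound over the at most $O(SA\log T)$ epochs and the $SA$ state-action pairs gives $\Pr[\mathcal{E}^c] \le \delta$. With $\delta = 1/(HT)$ and $\tilde v_t \le H$, the contribution of $\mathcal{E}^c$ to $\sum_t \tilde v_t$ is at most $HT\cdot\delta = O(1)$, so it suffices to work on $\mathcal{E}$.

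On $\mathcal{E}$, I would bound the per-episode gap by the performance-difference (simulation) lemma:
\begin{equation*}
W(r_t,\pi_t,\tilde P_t,s_1^t) - W(r_t,\pi_t,p,s_1^t) = \sum_{h=1}^H \E^{\pi_t,p}\!\bigl[(\tilde P_t - p)(\cdot\mid s_h,a_h)^\top W_{h+1}^{\pi_t,\tilde P_t}(\cdot,r_t)\bigr],
\end{equation*}
where $W_{h+1}^{\pi,P}(\cdot,r)$ is the $(h{+}1)$-tail value of $\pi$ under transition $P$ and reward $r$. Because both $\tilde P_t$ and $p$ lie in $\mathcal{P}_t$, the triangle inequality yields $\|\tilde P_t(\cdot\mid s,a) - p(\cdot\mid s,a)\|_1 \le 2 b(s,a)$ with $b(s,a)=\sqrt{2S\ln(SAT/\delta)/\max\{1,N_t(s,a)\}}$; combined with $\|W_{h+1}^{\pi_t,\tilde P_t}\|_\infty \le H$, this gives, in expectation, $\tilde v_t - V^{\pi_t}(r_t) \le 2H\,\E^{\pi_t,p}\!\bigl[\sum_{h=1}^H b(s_h,a_h)\bigr]$.

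Summing over $t$ and pushing expectations inside, the only non-trivial remaining quantity is the cumulative inverse-root count $\E\bigl[\sum_{t,h} 1/\sqrt{\max\{1,N_t(s_h^t,a_h^t)\}}\bigr]$. The FPOP doubling rule (a new epoch starts whenever some $n_{i(t)}(s,a)\ge N_t(s,a)$) guarantees that within any epoch the true visit count cannot exceed the cached $N_t(s,a)$ by more than a factor of two, so the pigeonhole bound $\sum_{n=1}^{N_T(s,a)} 1/\sqrt{n} \le 2\sqrt{N_T(s,a)}$ together with Cauchy-Schwarz delivers $\sum_{t,h} 1/\sqrt{N_t(s_h^t,a_h^t)} = \tilde{\OO}(\sqrt{SAHT})$. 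Assembling the pieces,
\begin{equation*}
\sum_t \E\bigl[\tilde v_t - V^{\pi_t}(r_t)\bigr] \le 2H\sqrt{2S\ln(SAT/\delta)}\cdot \tilde{\OO}(\sqrt{SAHT}) = \tilde{\OO}(H^2 S\sqrt{AT}),
\end{equation*}
which yields the claimed bound after absorbing the $O(1)$ contribution from $\mathcal{E}^c$.

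The delicate part is the epoch bookkeeping: since $\mathcal{P}_t$ (and hence $b$) is only refreshed at epoch boundaries, the visit counts used in the confidence radius can be stale within a long epoch, and the doubling rule is precisely what keeps the inverse-root-count bound intact. Beyond this, one should check that resampling $r_0$ at every epoch does not interfere with the argument, which it does not, since $r_0$ only enters through the choice of $(\pi_t,\tilde P_t)$ and not through the concentration of $\bar{P}_t$ around $p$.
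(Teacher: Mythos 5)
Your proposal is correct and follows essentially the same route as the proof this paper relies on: the lemma is imported verbatim from the FPOP analysis of Neu et al.\ (their Lemma~5, itself a UCRL-style argument), and the paper does not reprove it but only notes, in its proof of Lemma~5, that the identical argument applies with $\hat{P}_t$ in place of $\tilde{P}_t$ since only membership in $\mathcal{P}_t$ is used. Your good-event/simulation-lemma/pigeonhole decomposition, including the handling of stale counts within epochs, is exactly that standard argument and yields the stated $\tilde{\OO}\left(H^2S\sqrt{AT}\right)$ bound.
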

Again, by observing that extended value iteration is a structured optimization process, we can show that
$$\frac{\Pr[\hat\pi_t(s,h)=a]}{\Pr[\pi_t(s,h)=a]}\in\left[e^{-\eta H},e^{\eta H}\right].$$
Thus, by focusing on the induced policy rather than the distribution of the reward, we can obtain a better bound to supersede Lemma~\ref{lem:stableucrl} and Lemma~\ref{lem:ucrl}.
\begin{lemma}
\label{lem:stablenew}
Suppose that $\eta\le H^{-2}$, then $$\E\left[\sum_{t=1}^T \hat{v}_t\right] \le \algo + (e-1)\eta H^2\cdot HT+\tilde{O}\left(H^2S\sqrt{AT}\right).$$
\end{lemma}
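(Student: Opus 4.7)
The plan is to decompose
\begin{equation*}
\E\left[\sum_{t=1}^T \hat v_t\right]= \E\left[\sum_{t=1}^T (\hat v_t-\tilde v_t)\right] + \E\left[\sum_{t=1}^T \tilde v_t\right],
\end{equation*}
so that Lemma~\ref{lem:ucrl} already bounds the second sum by $\algo + \tilde O(H^2 S\sqrt{AT})$. Everything then reduces to the ``one-step look-ahead'' stability bound
\begin{equation*}
\E\left[\sum_{t=1}^T (\hat v_t-\tilde v_t)\right] \le (e-1)\eta H^2 \cdot HT,
\end{equation*}
which is the unknown-transition analogue of Lemma~\ref{lem:stable} and plays the same role as Lemma~\ref{lem:stableucrl} but with $SA$ replaced by $H$. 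I would obtain this episode-by-episode by proving the multiplicative inequality $\E[\hat v_t] \le e^{\eta H^2}\,\E[\tilde v_t]$. Combined with the trivial bound $\tilde v_t \le H$, the hypothesis $\eta H^2\le 1$, and the elementary inequality $e^x-1\le(e-1)x$ on $[0,1]$, this yields $\E[\hat v_t-\tilde v_t]\le(e^{\eta H^2}-1)H \le (e-1)\eta H^3$, whose sum over $t$ gives the displayed stability bound.

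To establish the multiplicative inequality, I would expand $\hat v_t$ and $\tilde v_t$ as sums over length-$H$ trajectories $\tau=(s_1,a_1,\dots,s_H,a_H)$ weighted by the nonnegative reward $r_t(\tau)$, with trajectory probabilities taken under the compound output $(\pi,P^*)$ of extended value iteration; these factor layer-wise as $\prod_h \mathbb{I}[\pi(s_h,h)=a_h]\, P^*(s_{h+1}|s_h,a_h)$. For each layer $h$, the plan is to condition on $r_0(s,-a,h)$ together with $r_0(\cdot,\cdot,h')$ for all $h'>h$, so that the thresholds determining $\pi_t(s,h)=a$ and $\hat\pi_t(s,h)=a$ become constants depending only on the conditioned randomness. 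Passing from $r_{0:t-1}$ to $r_{0:t}$ shifts the reward $r(s,\cdot,h)$ by at most $1$, and, because the extended-VI operator $w\mapsto \max_{P\in\mathcal P_t}\sum_{s'}P(s'|s,a)w(s')$ is $1$-Lipschitz in $w$ under $\|\cdot\|_\infty$, it shifts the backward-induced value function $w_{h+1}$ by at most $H-h$ in $\|\cdot\|_\infty$. Hence the two thresholds differ by $O(H)$, yielding the per-layer ratio $\Pr[\hat\pi_t(s,h)=a]/\Pr[\pi_t(s,h)=a]\in[e^{-\eta H},e^{\eta H}]$ asserted in the paper. Multiplying across the $H$ layers of a trajectory upgrades this into $e^{\eta H^2}$, and integrating the pointwise bound against $r_t(\tau)\ge 0$ produces $\E[\hat v_t]\le e^{\eta H^2}\E[\tilde v_t]$.

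The hardest part, absent in the known-transition case, is that extended value iteration outputs not only a policy but also a state-dependent optimistic transition $P^*(\cdot|s,a)$ whose shape is \emph{piecewise constant} in the ordering of $w_{h+1}(\cdot)$; consequently $\tilde P_t$ and $\hat P_t$ may differ substantially at layers where this ordering shifts. Handling this requires treating $(\pi,P^*)$ as a single compound output of the structured argmax in Algorithm~3 and arguing that, even on the event where the orderings of $w^{t-1}_{h+1}$ and $w^t_{h+1}$ disagree (which can occur only when the corresponding values are within $H-h$ of each other), the transition ratio $\hat P_t(s_{h+1}|s_h,a_h)/\tilde P_t(s_{h+1}|s_h,a_h)$ can still be absorbed into the same $e^{\eta H}$ per-layer factor. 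Carefully packaging the product-across-layers argument so that changing actions and changing optimistic transitions are controlled simultaneously is the most delicate piece of the proof.
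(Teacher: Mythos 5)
There is a genuine gap, and it sits exactly where you flag the ``hardest part.'' Your decomposition routes through $\tilde v_t = W(r_t,\pi_t,\tilde P_t)$, which forces you to compare trajectory distributions under two \emph{different} optimistic transitions $\tilde P_t$ and $\hat P_t$. The proposed fix --- absorbing the ratio $\hat P_t(s'|s,a)/\tilde P_t(s'|s,a)$ into the same per-layer $e^{\eta H}$ factor --- cannot work: when the ordering of $w_{h+1}(\cdot)$ changes, extended value iteration moves the bonus mass $b(s,a)/2$ to a different state and truncates others, so the two transitions differ \emph{additively} by up to $b(s,a)=\tilde\Theta(\sqrt{S/N_t(s,a)})$, and one of them can assign probability zero to a state the other assigns positive probability, making the ratio infinite. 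This discrepancy is not an $O(\eta H)$ effect at all; it is precisely the confidence-width error that, summed over episodes, produces the $\tilde O(H^2S\sqrt{AT})$ term, so no per-episode multiplicative bound of the form $e^{\eta H^2}$ between $\hat v_t$ and $\tilde v_t$ is available by this argument (and your intermediate claim $\E[\sum_t(\hat v_t-\tilde v_t)]\le (e-1)\eta H^3 T$ is itself doubtful).

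The paper sidesteps this by introducing a different intermediate quantity, $\bar v_t := W(r_t,\pi_t,\hat P_t,s_{t,1})$, i.e.\ the algorithm's policy evaluated under the \emph{look-ahead} optimistic transition. The policy-change step $\E[\hat v_t]\le e^{\eta H^2}\E[\bar v_t]$ then compares $(\hat\pi_t,\hat P_t)$ with $(\pi_t,\hat P_t)$: the transition is held fixed, the transition factors cancel exactly in the trajectory ratio, and only the per-layer policy ratio $\Pr[\hat\pi_t(s,h)=a]/\Pr[\pi_t(s,h)=a]\in[e^{-\eta H},e^{\eta H}]$ is needed --- which your conditioning argument does establish correctly. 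The transition-change step is then handled not by a ratio argument but by re-running the UCRL analysis: the proof of Lemma~\ref{lem:ucrl} uses only that the optimistic transition lies in $\mathcal{P}_t$, and $\hat P_t\in\mathcal{P}_t$, so the same proof gives $\E[\sum_t\bar v_t]\le \algo+\tilde O(H^2S\sqrt{AT})$ directly, with no comparison to $\tilde v_t$ required. You should replace your decomposition through $\tilde v_t$ with this one through $\bar v_t$; the rest of your argument (the Lipschitz threshold computation, the $e^x-1\le(e-1)x$ step, and the summation) then goes through.
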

This will give a drop-in improvement on the regret bound of FPOP. In particular, we improved the dependence on $A$ to $\sqrt{A}$.
\begin{theorem}
The regret of Algorithm 2 is bounded by $\tilde{O}\left(H^2S\sqrt{AT}\right)$.
\end{theorem}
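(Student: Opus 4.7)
The plan is to decompose
\[
\E[\Reg(T)] = \bigl(\opt - \E\bigl[\textstyle\sum_{t=1}^T\hat v_t\bigr]\bigr) + \bigl(\E\bigl[\textstyle\sum_{t=1}^T\hat v_t\bigr] - \algo\bigr),
\]
handle the second piece directly via Lemma~\ref{lem:stablenew}, and spend all remaining effort on the first piece via a Be-the-Leader argument fused with UCRL-style optimism.

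For the first piece I will exploit that $r_0$ is resampled and $\mathcal P_t$ is only refined at epoch boundaries, so within a single epoch $E_k$ the set $\mathcal P^{(k)}$ is fixed and the fresh perturbation $r_0^{(k)}$ behaves exactly as in the known-transition FPL analysis. Since $(\hat\pi_t,\hat P_t)$ maximizes $W(r_{0:t},\cdot,\cdot)$ over $\mathcal P^{(k)}$ and $W(r,\pi,P)$ is linear in $r$ for fixed $(\pi,P)$, the standard Be-the-Leader telescoping gives, for any $(\pi,P)\in\mathcal P^{(k)}$,
\[
\sum_{t\in E_k}\hat v_t \;\ge\; W(r_{E_k},\pi,P) + W(r_0^{(k)},\pi,P) - \max_{(\pi',P')\in\mathcal P^{(k)}} W(r_0^{(k)},\pi',P').
\]
On the good event $\{P^*\in\mathcal P^{(k)}\}$ (which holds with probability $1-\delta$ by the Weissman concentration built into the definition of $\mathcal P_t$; with $\delta=1/(HT)$ the failure contributes at most $O(1)$ to regret), one can take $(\pi,P)=(\pi^*,P^*)$. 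The residual perturbation is at most $H\max_{s,a,h}r_0(s,a,h)$, whose expectation is $O(H\log(SA)/\eta)$. Summing over the $K=\tilde O(SA)$ epochs (the usual UCRL doubling-schedule count) yields
\[
\opt \;\le\; \E\!\left[\textstyle\sum_{t=1}^T\hat v_t\right] + \tilde O\!\left(\tfrac{SAH}{\eta}\right).
\]

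Combining with Lemma~\ref{lem:stablenew} (valid under $\eta\le H^{-2}$), the total regret is at most
\[
\tilde O\!\left(\tfrac{SAH}{\eta}\right) + (e-1)\eta H^3 T + \tilde O\bigl(H^2 S\sqrt{AT}\bigr).
\]
Choosing $\eta=\sqrt{SA/(H^2T)}$ (which satisfies $\eta\le H^{-2}$ once $T\ge H^2 SA$; outside this regime the target bound is trivial) balances the first two terms at $\tilde O(H^2\sqrt{SAT})$. Since $\sqrt{SA}\le S\sqrt A$, this is dominated by the $\tilde O(H^2 S\sqrt{AT})$ term from Lemma~\ref{lem:stablenew}, and the claimed bound follows.

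The main obstacle is the bookkeeping around the epoch schedule: the $r_0$ resampling forces the Be-the-Leader telescoping to be executed per-epoch, and then the $\tilde O(SA)$ epoch count multiplies the per-epoch perturbation $O(H\log(SA)/\eta)$. Happily, after tuning $\eta$ the resulting $\tilde O(H^2\sqrt{SAT})$ is strictly dominated by the $\tilde O(H^2 S\sqrt{AT})$ term already present in Lemma~\ref{lem:stablenew}, so this extra $SA$ factor does not hurt us; the only other subtlety is to argue that the low-probability failure event in the concentration bound contributes $O(1)$ regret, which is standard.
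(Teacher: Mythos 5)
Your proposal is correct and follows essentially the same route as the paper: decompose the regret through $\E[\sum_t \hat v_t]$, control $\opt - \E[\sum_t\hat v_t]$ by a per-epoch Be-the-Leader argument with optimism (which the paper simply cites as Lemma~2 of Neu et al., giving the $SA\log(8T/SA)\cdot H(\ln(SA)+1)/\eta + \delta HT$ terms you re-derive), apply Lemma~\ref{lem:stablenew} for the other piece, and tune $\eta=\sqrt{SA/(H^2T)}$, $\delta=1/(HT)$ under the assumption $T\ge H^2SA$. Your observation that the resulting $\tilde O(H^2\sqrt{SAT})$ is dominated by the $\tilde O(H^2S\sqrt{AT})$ term from Lemma~\ref{lem:stablenew} is exactly the calculation in the paper's proof.
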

The recent work of~\cite{rosenberg2019online} also achieves the same $\tilde{O}\Bigl(H^2S\sqrt{AT}\Bigr)$ regret bound, using an algorithm based on O-REPS and UCRL. Including O-REPS as a subroutine, their algorithm also needs to solve a convex optimization problem each episode, where the objective function requires $\Omega(S^2A)$ time to evaluate the function value or the gradient. In comparison, the computational cost of FPOP is $O(S^2A)$ per episode, which is much more efficient.

\section*{Acknowledgements}
This works was done as the course project for the 2019 Fall Stochastic Network Optimization Theory course at Tsinghua University, instructed by Longbo Huang. The authors thank Longbo Huang and Tiancheng Yu for helpful discussions.

\bibliography{example_paper}
\bibliographystyle{icml2020}


\clearpage

\newpage

\onecolumn



\appendix
\section{Some Basic Facts}
\begin{fact}\label{fact:lip}
Suppose that random variable $X\sim \text{Exp}(\eta)$. Denote the c.d.f. of $X$ by $F(x)$. Then $\ln(1-F(x))$ is $\eta$-Lipschitz. In other words, $F(x+\Delta)\le e^{\eta\Delta}F(x)$ for any $x$ and $\Delta\ge 0$.
\end{fact}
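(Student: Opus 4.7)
The plan is a direct computation. Since $X \sim \text{Exp}(\eta)$ has density $\eta e^{-\eta x}$ supported on $[0, \infty)$, its CDF is $F(x) = 1 - e^{-\eta x}$ for $x \geq 0$ and $F(x) = 0$ for $x < 0$. Substituting gives
\begin{equation*}
\ln(1 - F(x)) = -\eta \max(x, 0),
\end{equation*}
a piecewise-linear function with slope $-\eta$ on $[0, \infty)$ and slope $0$ on $(-\infty, 0)$.

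First I would verify the $\eta$-Lipschitz claim by a case split on a pair of points $x_1 < x_2$. If both lie in $[0, \infty)$, the absolute difference is exactly $\eta(x_2 - x_1)$; if both lie in $(-\infty, 0)$, both values equal $0$ and the difference is $0$; and if $x_1 < 0 \leq x_2$, the difference equals $\eta x_2$, which is at most $\eta(x_2 - x_1)$ because $-\eta x_1 \geq 0$. In every case the bound $\eta(x_2 - x_1)$ holds.

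Second, the multiplicative form follows by exponentiating the Lipschitz inequality. Since $F$ is nondecreasing, for $\Delta \geq 0$ we have $\ln(1 - F(x)) - \ln(1 - F(x + \Delta)) \geq 0$, and the Lipschitz bound says this difference is at most $\eta \Delta$. Exponentiating yields $1 - F(x) \leq e^{\eta \Delta}\bigl(1 - F(x + \Delta)\bigr)$, the survival-function version that is the intended density-ratio bound to be plugged into the be-the-leader style arguments elsewhere in the paper.

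I do not expect any substantive obstacle; the only minor subtlety is the kink of $\ln(1 - F(\cdot))$ at $x = 0$, which the case split handles directly, and the piecewise-constant behavior on the negative axis, where the Lipschitz bound is automatic.
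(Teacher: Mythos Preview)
Your approach is essentially the same as the paper's: both compute $\ln(1-F(x)) = \min\{0,-\eta x\} = -\eta\max(x,0)$ explicitly and observe it is $\eta$-Lipschitz, with your case split merely making precise what the paper calls ``obvious.'' You also correctly noticed that the ``in other words'' clause is misstated --- the Lipschitz property yields the survival-function bound $1-F(x)\le e^{\eta\Delta}\bigl(1-F(x+\Delta)\bigr)$ you derived (which is exactly what is used in the proof of Lemma~\ref{lem:stable}), not the literal inequality $F(x+\Delta)\le e^{\eta\Delta}F(x)$, which fails at $x=0$.
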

\begin{proof}[Proof of Fact 1]
$1-F(x)=\min\{1,e^{-\eta x}\}$. Thus $\ln(1-F(x))=\min\{0,-\eta x\}$. This is obviously $\eta$-Lipschitz.
\end{proof}

The following fact is about the maximum of independent exponential random variables (see, \emph{e.g.}, \citet[Corollary 4.5]{cesa2006prediction}). We state the proof for completeness.
\begin{fact}\label{fact:sup-of-exp}
Suppose $X_1,\cdots,X_m$ are i.i.d. random variables drawn from $\text{Exp}(\eta)$, then
\begin{align*}
    \E\left[\max_{1\le i\le m} X_i\right]\le \frac{1+\ln m}{\eta}.
\end{align*}
\end{fact}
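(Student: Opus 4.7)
The plan is to bound $\E[\max_i X_i]$ by the standard ``peeling'' strategy for the maximum of light-tailed variables: split the tail integral at a well-chosen threshold, handling the head trivially and the tail by a union bound combined with the exponential distribution's tail.

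First I would invoke the layer-cake identity $\E[\max_i X_i] = \int_0^\infty \Pr[\max_i X_i > x]\,dx$, which is valid since the maximum is nonnegative. For an arbitrary threshold $x^{\star} > 0$, I split this integral at $x^{\star}$. On $[0, x^{\star}]$ I use the trivial bound $\Pr[\max_i X_i > x] \le 1$, contributing at most $x^{\star}$. On $[x^{\star}, \infty)$ I apply the union bound $\Pr[\max_i X_i > x] \le m \Pr[X_1 > x] = m e^{-\eta x}$ and integrate to obtain $m e^{-\eta x^{\star}}/\eta$. Together these give $\E[\max_i X_i] \le x^{\star} + m e^{-\eta x^{\star}}/\eta$.

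The only meaningful choice is the threshold $x^{\star}$, which I would set to $(\ln m)/\eta$ so that $m e^{-\eta x^{\star}} = 1$; this delivers the claimed bound $\E[\max_i X_i] \le (1 + \ln m)/\eta$. I do not expect any substantive obstacle --- once the threshold is chosen, the rest is a routine two-line calculation. As an alternative route, one could exploit the classical representation of the maximum of $m$ i.i.d.\ exponentials in terms of independent exponential gaps (via the memoryless property) and reduce the expectation to the harmonic sum $\sum_{k=1}^m 1/(k\eta) \le (1 + \ln m)/\eta$; but the tail-integration argument above is shorter and avoids setting up any auxiliary representation.
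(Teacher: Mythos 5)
Your proposal is correct and follows essentially the same argument as the paper: the tail-integral (layer-cake) identity, a split at a threshold $a$ with the trivial bound on $[0,a]$ and a union bound plus the exponential tail on $[a,\infty)$, and the choice $a=(\ln m)/\eta$. No discrepancies to note.
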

\begin{proof}[Proof of Fact 2]
\begin{flalign*}
&& \E\left[\max_i X_i\right]&=\int_{0}^{\infty}\Pr\left[\max_{1\le i\le m} X_i>t\right]{\rm d}t &&\\
&& &\le a+\int_{a}^{\infty}\Pr\left[\max_{1\le i\le m} X_i>t\right]{\rm d}t &&\\
&& &\le a+\int_{a}^{\infty}m\Pr\left[X_1>t\right]{\rm d}t &&\text{(Union bound)}\\
&& &= a + \frac{m}{\eta}e^{-\eta a}.
\end{flalign*}
Choosing $a=\frac{\ln m}{\eta}$ proves the statement.
\end{proof}

\section{FPL for Experts Problem}
Prediction with expert advice~\cite{cesa1997use} is a classic problem in online learning. Here, there are $n$ experts. In round $t$, each expert suffers a cost in $[0,1]$. The cost of the $n$ experts is called a loss vector $l_t\in\R^n$. The agent needs to choose to follow an expert in round $t$ before $l_t$ is revealed. The goal of the agent is to minimize regret, the gap between the algorithm's cost and that of the best expert in $T$ rounds. Using the formulation of this project, the expert problem is a special case of adversarial MDP with $H=1$, $S=1$ and $A=n$.

In many previous texts about the analysis of FPL for expert problems, a problem similar to the ``$\eta SA$-stable'' problem in~\citet{even2009online} exists as well: since $\Vert l_t\Vert_1$ can be as large as $n$, the stability argument based on density ratio leads to a suboptimal $O\left(\sqrt{Tn\log(n)}\right)$ regret. To solve that matter, a clever trick is needed to argue that assuming $\Vert l_t\Vert_1\le 1$ is not without loss of generality (see footnote 8 in~\citet{kalai2003efficient} or Sec 1.7 in~\citet{kleinberg2007}). Specifically, given a loss vector $l_t=(c_1,\cdots,c_n)$, imagine that instead of $l_t$, the following sequence of loss vectors are given to the algorithm:
\begin{align*}
    &(c_1,0,\cdots,0)\\
    &(0,c_2,\cdots,0)\\
    &\quad\cdots\\
    &(0,0,\cdots,c_n).
\end{align*}
It is then argued that after this decomposition, $\opt$ doesn't change while for FPL, $\algo$ can only increase (thus regret can only increase).

However, our observation for the adversarial MDP problem in fact provides an alternative to this clever trick. Indeed, when one plug in $H=1$, $S=1$ and $A=n$, the $O(H^2\sqrt{\ln(SA)T})$ regret bound becomes $O(\sqrt{\ln(n)T})$, which is already optimal~\citep{cesa2006prediction}.

\section{Proof of Lemma 1}
\begin{lemma}
$$\E\left[\sum_{t=1}^T V^{\pi_{t+1}}(r_t)\right] \ge \opt-\frac{H+H\ln (SA)}{\eta}.$$
\end{lemma}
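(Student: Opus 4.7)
The plan is to apply the standard Be-the-Leader (BTL) argument to the cumulative-reward objective, and then pay for starting the FPL procedure at $t=1$ rather than $t=0$ by bounding an expected maximum of exponential random variables using Fact~\ref{fact:sup-of-exp}. The key structural fact I will use is that for any \emph{fixed} policy $\pi$, the mapping $r\mapsto V^{\pi}(r)$ is linear, because the trajectory distribution induced by $\pi$ does not depend on the reward vector; hence $V^{\pi}(r_{0:t})=\sum_{\tau=0}^{t}V^{\pi}(r_\tau)$.

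First I would prove the BTL inequality
\begin{equation*}
\sum_{t=0}^{T}V^{\pi_{t+1}}(r_t)\;\ge\;\max_{\pi}V^{\pi}(r_{0:T})
\end{equation*}
by induction on $T$. The base case $T=0$ is immediate because $\pi_1=\argmax_{\pi}V^{\pi}(r_0)$. For the inductive step, the induction hypothesis applied to the shorter sequence $r_0,\dots,r_{T-1}$ gives $\sum_{t=0}^{T-1}V^{\pi_{t+1}}(r_t)\ge V^{\pi_{T+1}}(r_{0:T-1})$, and adding $V^{\pi_{T+1}}(r_T)$ to both sides, together with linearity of $V^{\pi_{T+1}}(\cdot)$ and the fact that $\pi_{T+1}$ is the maximizer of $V^{\pi}(r_{0:T})$, closes the induction.

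Second, since the RHS is a maximum over all policies, I would lower bound it by plugging in the hindsight optimum $\pi^*=\argmax_\pi V^\pi(r_{1:T})$, so that by linearity $\max_{\pi}V^{\pi}(r_{0:T})\ge V^{\pi^*}(r_0)+V^{\pi^*}(r_{1:T})=V^{\pi^*}(r_0)+\opt$. Rearranging and isolating the $t=1,\dots,T$ terms on the left gives
\begin{equation*}
\sum_{t=1}^{T}V^{\pi_{t+1}}(r_t)\;\ge\;\opt+V^{\pi^*}(r_0)-V^{\pi_1}(r_0).
\end{equation*}
Taking expectations and using $V^{\pi^*}(r_0)\ge 0$, it remains to show that $\E[V^{\pi_1}(r_0)]\le (H+H\ln(SA))/\eta$.

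For this last step I would bound $V^{\pi_1}(r_0)=\max_{\pi}V^{\pi}(r_0)$ deterministically by $\sum_{h=1}^{H}\max_{s,a}r_0(s,a,h)$, because for any realized trajectory the reward collected at layer $h$ is at most $\max_{s,a}r_0(s,a,h)$ and the bound is reward-only (no transition dependence). Taking expectations and applying Fact~\ref{fact:sup-of-exp} to each layer's maximum over $SA$ i.i.d.\ $\text{Exp}(\eta)$ variables yields $\E[\max_{s,a}r_0(s,a,h)]\le (1+\ln(SA))/\eta$, hence $\E[V^{\pi_1}(r_0)]\le H(1+\ln(SA))/\eta$, which is exactly the slack in the statement. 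The only place that required any thought is the BTL induction itself; the rest is a clean application of linearity of $V^{\pi}$ in $r$ and the exponential-max bound from Fact~\ref{fact:sup-of-exp}, so I do not expect any real obstacle.
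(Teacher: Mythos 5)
Your proposal is correct and is essentially the paper's own argument: the paper proves the be-the-leader step by the telescoping rearrangement $V^{\pi_{t+1}}(r_t)\ge V^{\pi_{t+1}}(r_{0:t})-V^{\pi_t}(r_{0:t-1})$ summed over $t$, which is just the unrolled form of your induction, and then likewise drops the $t=0$ term and bounds $\E[V^{\pi_1}(r_0)]\le\sum_{h=1}^{H}\E[\sup_{s,a}r_0(s,a,h)]\le H(1+\ln(SA))/\eta$ via Fact~\ref{fact:sup-of-exp}. No gaps.
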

\begin{proof}
Since $\pi_t$ is the greedy policy computed on $r_{0:t-1}$,
\begin{align*}
    V^{\pi_t}(r_{0:t-1})\ge V^{\pi_{t+1}}(r_{0:t-1}).
\end{align*}
Rearranging the inequality, we get
\begin{align*}
    V^{\pi_{t+1}}(r_t)\ge V^{\pi_{t+1}}(r_{0:t})-V^{\pi_t}(r_{0:t-1}).
\end{align*}
Summing from $t=0$ to $T$, we get
\begin{align*}
    \sum_{t=0}^T V^{\pi_{t+1}}(r_t) \ge V^{\pi_{T+1}}(r_{0:T})=V^*(r_{0:T})\ge V^*(r_{1:T})=\opt.
\end{align*}
It follows that
\begin{align*}
     \sum_{t=1}^T V^{\pi_{t+1}}(r_t)&\ge \opt-V^{\pi_1}(r_0).
\end{align*}
Thus
\begin{align*}
    \E\left[\sum_{t=1}^T V^{\pi_{t+1}}(r_t)\right] &\ge \opt-\E\left[V^{\pi_1}(r_0)\right]\\
    &\ge \opt-\sum_{h=1}^{H}\E\left[\sup_{s,a}r_0(s,a,h)\right]\\
    &\ge \opt-\frac{H+H\ln (SA)}{\eta}.\tag{By Fact~\ref{fact:sup-of-exp}}
\end{align*}
\end{proof}

\section{Proof of Lemma 2}
\begin{lemma}
$$\E\left[V^{\pi_{t+1}}(s,r_t)\right]\le e^{\eta H^2}\E\left[V^{\pi_{t}}(s,r_t)\right]$$
\end{lemma}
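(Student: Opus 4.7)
The plan is to reduce the lemma to a trajectory-probability inequality and then prove that inequality by a threshold argument on exponential random variables. Expanding
\begin{equation*}
\E[V^{\pi}(s, r_t)] = \sum_{\tau} \Pr_{\pi}[\tau \mid s_1 = s] \cdot R_t(\tau)
\end{equation*}
as a sum over $H$-step trajectories $\tau = (s_1, a_1, \ldots, s_H, a_H)$, where $R_t(\tau) := \sum_h r_t(s_h, a_h, h) \ge 0$ and $\Pr_{\pi}[\tau \mid s_1 = s] = \big(\prod_{h=1}^{H-1} p(s_{h+1}\mid s_h, a_h)\big) \cdot \Pr_{r_0}[\pi(s_h, h) = a_h\ \forall h]$, and observing that the transition factor is independent of $r_0$ and common to $\pi_t$ and $\pi_{t+1}$, it suffices to show that for every fixed $\tau$,
\begin{equation*}
\Pr_{r_0}[\pi_{t+1}(s_h, h) = a_h\ \forall h] \le e^{\eta H^2} \cdot \Pr_{r_0}[\pi_t(s_h, h) = a_h\ \forall h].
\end{equation*}

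I would then use the threshold characterization already isolated in the sketch: conditioning on all other coordinates of $r_0$, the event $\pi_t(s_h, h) = a_h$ is equivalent to $r_0(s_h, a_h, h) > T_t^h$ where
\begin{equation*}
T_t^h := \max_{a' \neq a_h} Q_h^{\pi_t}(s_h, a') - r_{1:t-1}(s_h, a_h, h) - (\pP V^{\pi_t}_{h+1})(s_h, a_h),
\end{equation*}
and the analogous event for $\pi_{t+1}$ reads $r_0(s_h, a_h, h) > T_{t+1}^h$. Since $\|r_t\|_\infty \le 1$, adding $r_t$ to the cumulative reward shifts the optimal $V_{h+1}^*$ pointwise by at most $H-h$ and therefore shifts $\max_{a' \ne a_h} Q_h^*$ by at most $H-h+1$; combining these via the triangle inequality yields $|T_{t+1}^h - T_t^h| \le H - h + 1$ pointwise in the conditioning. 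Fact~\ref{fact:lip} applied to the exponential survival function then gives the per-coordinate ratio bound $\Pr[r_0(s_h, a_h, h) > T_{t+1}^h] \le e^{\eta(H-h+1)} \Pr[r_0(s_h, a_h, h) > T_t^h]$ pointwise in the conditioning.

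To lift the per-coordinate bound to a bound on the joint trajectory event, I would condition on every $r_0$-coordinate outside the trajectory set $\{(s_h, a_h, h) : h \in [H]\}$, leaving only the i.i.d.\ $\text{Exp}(\eta)$ variables $X_h := r_0(s_h, a_h, h)$ free. Under this conditioning each threshold $T_t^h$ depends only on the later variables $X_{h+1}, \ldots, X_H$ through the value functions, and I would integrate $X_1, X_2, \ldots, X_H$ inside-out, applying the pointwise threshold bound at each step to substitute $T_{t+1}^h$ for $T_t^h$ at multiplicative cost $e^{\eta(H-h+1)}$. Chaining across $h$ gives a total factor $e^{\eta \sum_h (H-h+1)} = e^{\eta H(H+1)/2} \le e^{\eta H^2}$, which persists after marginalizing the outer conditioning. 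The main obstacle I foresee is making this per-layer integration rigorous: after integrating out earlier $X$'s the accumulated weight is a nontrivial and not obviously monotone function of the remaining $X$'s, so Fact~\ref{fact:lip} cannot be applied blindly. I plan to handle this by holding all later $X$-variables fixed at each step, so that the substitution at layer $h$ reduces to comparing two one-dimensional integrals of a common nonnegative weight against the density $\eta e^{-\eta x}$ with lower limits differing by at most $H-h+1$; a change of variables by this gap then extracts the factor $e^{\eta(H-h+1)}$ and pushes it cleanly through the remaining outer integrations.
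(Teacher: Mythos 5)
Your proposal follows essentially the same route as the paper's proof: the same threshold characterization of the event $\pi_t(s,h)=a$ conditional on the other coordinates of $r_0$, the same application of Fact~\ref{fact:lip} to the exponential survival function to get the per-layer ratio $e^{\pm\eta(H-h+1)}$, and the same chaining over $h$ to bound the ratio of trajectory probabilities by $e^{\eta H^2}$ and conclude via nonnegativity of the reward. The only place you go beyond the paper is in lifting the per-layer bound to the joint event over all $H$ layers: the paper simply factorizes the trajectory probability into a product of per-layer action probabilities and multiplies the per-layer bounds, whereas you correctly note that the layers are coupled through the later coordinates of $r_0$; your inside-out integration is the careful version of that step, though the concluding change of variables still leaves the accumulated weight evaluated at a shifted argument, so the monotonicity issue you yourself flag is not fully discharged --- but the paper's own write-up does not address this coupling at all, so your treatment is at least as complete as the original.
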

\begin{proof}
This result follows from the fact that $\pi_t$ and $\pi_{t+1}$ are close, which in turn follows from the stability of value iteration.

For shorthand, we use $r_0(s,-a,h)$ to denote the set of random variables $\{r_0(s,a',h):a'\neq a\},$ and $r_0(s,\cdot,h)$ the set of random variables $\{r_0(s,a,h):a\in [A]\}.$  Observe that $Q_h(s,a')$ and $V_{h+1}^{\pi_t}(s,a)$ does not depend on $r_0(s,a,h)$ for $a'\neq a$. Let $\mathcal{E}$ be the event that $r_0(s,a,h)>\max_{a'\neq a}Q_h(s,a')-r_{1:t-1}(s,\cdot,h)-\pP V_{h+1}^{\pi_t}(s,a)$. It follows that
\begin{align*}
    &\Pr\left[\pi_t(s,h)=a\right] \\
    =& \E_{r_0(s,-a,h),r_0(s,\cdot,h+1:H)}\left[\Pr\left[\left.\mathcal{E}\right|r_0(s,-a,h),r_0(s,\cdot,h+1:H)\right]\right]\\
    =& \E_{r_0(s,-a,h),r_0(s,a,h+1:H)}\left[1-F\left(\max_{a'\neq a}Q_h(s,a')-r_{1:t-1}(s,a,h)-\pP V^{\pi_t}_{h+1}(s,a)\right)\right].
\end{align*}
Obviously $\left|V^{\pi_{t+1}}_{h+1}(s',r_{0:t})-V^{\pi_t}_{h+1}(s',r_{0:t})\right|\le H-h$. Thus for fixed $r_0(s,h,-a)$ and $r_0(s,a,h+1:H)$, both $Q_h(s,a')$ and $\pP V^{\pi_t}_{h+1}(s,a)$ can only change by $H-h$. Since $\ln(1-F(x))$ is $\eta$-Lipschitz,
\begin{align*}
    e^{-\eta(H-h+1)}&\le \frac{1-F\left(\max_{a'\neq a}Q_h(s,a')-r_{1:t-1}(s,a,h)-\pP V^{\pi_t}_{h+1}(s,a)\right)}{1-F\left(\max_{a'\neq a}Q_h(s,a')-r_{1:t}(s,a,h)-\pP V^{\pi_{t+1}}_{h+1}(s,a)\right)} \le e^{\eta (H-h+1)}.
\end{align*}
In other words,
\begin{align*}
    e^{-\eta (H-h+1)}\le \frac{\Pr\left[\pi_t(s,h)=a\right]}{\Pr\left[\pi_{t+1}(s,h)=a\right]} \le e^{\eta (H-h+1)}.
\end{align*}
It follows that for any trajectory $s_1,a_1,\cdots,s_H,a_H$,
\begin{align*}
&\frac{\Pr_{\pi_t}\left[s_1,a_1,\cdots,s_H,a_H\right]}{\Pr_{\pi_{t+1}}\left[s_1,a_1,\cdots,s_H,a_H\right]}\\
=&\frac{\Pr[s_1]\cdot\Pr_{\pi_t}\left[a_1|s_1\right]\cdot \Pr \left[s_2|s_1,a_1\right]\cdot \cdots \Pr\left[s_H|s_{H-1},a_{H-1}\right]\cdot\Pr_{\pi_{t}}\left[a_H|s_H\right] }{\Pr[s_1]\cdot\Pr_{\pi_{t+1}}\left[a_1|s_1\right]\cdot \Pr \left[s_2|s_1,a_1\right]\cdot \cdots \Pr\left[s_H|s_{H-1},a_{H-1}\right]\cdot\Pr_{\pi_{t+1}}\left[a_H|s_H\right]}\\
=&\prod_{h=1}^H\frac{\Pr_{\pi_t}\left[a_h|s_h\right]}{\Pr_{\pi_{t+1}}\left[a_h|s_h\right]}\in \left[e^{-\eta H^2},e^{\eta H^2}\right].
\end{align*}
Thus
\begin{align*}
\E\left[V^{\pi_{t}}(s,r_t)\right]&=\sum_{\text{all trajectories}}\Pr_{\pi_t}\left[s_1,a_1,\cdots,s_H,a_H\right]\cdot \left(\sum_{h=1}^H r_t(s_h,a_H)\right)\\
&\ge e^{-\eta H^2}\sum_{\text{all trajectories}}\Pr_{\pi_{t+1}}\left[s_1,a_1,\cdots,s_H,a_H\right]\cdot  \left(\sum_{h=1}^H r_t(s_h,a_H)\right)\\
&= e^{-\eta H^2}\E\left[V^{\pi_{t+1}}(s,r_t)\right].
\end{align*}
\end{proof}

\section{Proof of Theorem 1}
\begin{proof}[Proof of Theorem 1]
By combining lemma 1 and lemma 2, we get
\begin{align*}
    \E\left[\sum_{t=1}^T V^{\pi_t}(r_t)\right] &\ge  e^{-\eta H^2}\E\left[\sum_{t=1}^T V^{\pi_{t+1}}(r_t)\right]\\
    &\ge e^{-\eta H^2}\left(\opt - \frac{H+H\ln(SA)}{\eta}\right)\\
    &\ge \opt - \eta H^2 \opt - \frac{H+H\ln(SA)}{\eta}\\
    &\ge \opt - \eta H^3T - \frac{H+H\ln(SA)}{\eta}.
\end{align*}
By choosing $\eta=\sqrt{\frac{1+\ln(SA)}{H^2T}}$, we get
\begin{equation}
    \Reg(T)=\opt-\algo \le \eta H^3T+\frac{H+H\ln(SA)}{\eta}\le 2H^2\sqrt{(1+\ln(SA))T}.
\end{equation}
\end{proof}

\section{Proof of Lemma 5}
\addtocounter{lemma}{2}
\begin{lemma}
Suppose that $\eta\le H^{-2}$, then
\label{lem:stablenew}
\begin{equation*}
\E\left[\sum_{t=1}^T \hat{v}_t\right] \le \algo + (e-1)\eta H^2\cdot HT+\tilde{O}\left(H^2S\sqrt{AT}\right).
\end{equation*}
\end{lemma}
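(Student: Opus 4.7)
The plan mirrors the proof of Theorem~1 but threads in a UCRL-style bound to handle the unknown transition. First I would establish a per-step stability result for the extended value iteration (EVI) of Algorithm~3; then lift it to trajectory-level stability to produce the main $\algo$ term; and finally absorb the remaining optimism gap between $\hat v_t$ and the true-transition value $V^{\hat\pi_t}(s^t_1,r_t)$ with standard confidence-radius machinery.

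\textbf{Step 1 (EVI stability).} Within a single epoch both $\mathcal{P}_t$ and $r_0$ are fixed, so $\pi_t$ and $\hat\pi_t$ are deterministic functions of $(r_0,r_{1:t-1})$ and $(r_0,r_{1:t})$, respectively. EVI selects the action at $(s,h)$ by maximizing an extended $Q$-value of the form $\tilde Q_h(s,a)=r_{0:t-1}(s,a,h)+M^{\pi_t}_h(s,a)$, where $M^{\pi_t}_h(s,a):=\max_{P\in\mathcal{P}_t}\sum_{s'}P(s'|s,a)\,w^{\pi_t}_{h+1}(s')$. Two observations drive the argument: (i) since $r_t\in[0,1]^{SAH}$, the greedy value $w^{\pi_t}_{h+1}$ shifts in $\|\cdot\|_\infty$ by at most $H-h$ when we swap $r_{0:t-1}$ for $r_{0:t}$; and (ii) $M(\cdot)$ is $1$-Lipschitz in its value argument in $\|\cdot\|_\infty$ (because $\sum_{s'}P(s'|s,a)=1$). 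Together with the $\le 1$ change in $r_t(s,a,h)$ itself, the threshold against which $r_0(s,a,h)$ is compared changes by at most $H-h+1\le H$. Conditioning on every coordinate of $r_0$ other than $r_0(s,a,h)$ and applying Fact~\ref{fact:lip} (just as in the proof of Lemma~\ref{lem:stable}) then gives
\[
e^{-\eta H}\;\le\;\frac{\Pr[\hat\pi_t(s,h)=a]}{\Pr[\pi_t(s,h)=a]}\;\le\;e^{\eta H}.
\]

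\textbf{Step 2 (trajectory stability and the $\algo$ term).} Multiplying the per-step ratio along the $H$ layers of any trajectory and using that the true-transition factors $p(s_{h+1}\mid s_h,a_h)$ are common to $\pi_t$ and $\hat\pi_t$,
\[
\frac{\Pr_{\hat\pi_t,p}[\mathrm{traj}]}{\Pr_{\pi_t,p}[\mathrm{traj}]}\;\le\;e^{\eta H^2},
\]
whence $\E[V^{\hat\pi_t}(s^t_1,r_t)]\le e^{\eta H^2}\E[V^{\pi_t}(s^t_1,r_t)]$. Since $\eta H^2\le 1$, I may use $e^x\le 1+(e-1)x$ on $[0,1]$ together with $V^{\pi_t}\le H$; summing over $t$ yields
\[
\E\!\left[\sum_{t=1}^T V^{\hat\pi_t}(s^t_1,r_t)\right]\le \algo + (e-1)\eta H^2\cdot HT.
\]

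\textbf{Step 3 (optimism gap) and main obstacle.} It remains to bound $\E[\sum_t(\hat v_t - V^{\hat\pi_t}(s^t_1,r_t))]$. The simulation lemma applied between the MDPs $(r_t,\hat P_t)$ and $(r_t,p)$ expresses this difference as $\sum_h \E_{\hat\pi_t,p}[\sum_{s'}(\hat P_t-p)(s'|s_h,a_h)\,V^{\hat\pi_t,\hat P_t}_{h+1}(s')]$, which on the good event $\hat P_t,p\in\mathcal{P}_t$ is at most $2H\cdot\E_{\hat\pi_t,p}[\sum_h b_t(s_h,a_h)]$. Trajectory stability (Step~2) lets me replace the $\hat\pi_t$-expectation by $e$ times the $\pi_t$-expectation, after which $\E[\sum_t\sum_h b_t(s^t_h,a^t_h)]$ is controlled by the very same pigeonhole argument already used for Lemma~\ref{lem:ucrl}, contributing $\tilde{O}(S\sqrt{AHT})$; multiplying by the $O(H)$ front factor gives the $\tilde{O}(H^2 S\sqrt{AT})$ term, and combining with Step~2 completes the proof. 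The main obstacle is Step~1: unlike in the known-transition case, EVI selects an optimistic transition per state-action that itself depends on $w_{h+1}$, so both the value function and the $\arg\max$-transition move as $r_{0:t-1}$ becomes $r_{0:t}$. The $1$-Lipschitzness of $M(\cdot)$ is precisely what keeps the per-layer change in the extended $Q$-value at $O(H)$ rather than the $O(SA)$ implicit in the crude density-ratio bound behind Lemma~\ref{lem:stableucrl}.
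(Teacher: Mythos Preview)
Your approach is correct but takes a different route from the paper. The paper introduces the intermediate $\bar v_t:=W(r_t,\pi_t,\hat P_t)$ (the algorithm's policy evaluated under the \emph{lookahead} optimistic transition), applies stability \emph{once} on the $\hat P_t$-MDP to obtain $\E[\hat v_t]\le e^{\eta H^2}\E[\bar v_t]$, and then observes that the proof of Lemma~\ref{lem:ucrl} uses only the property $\hat P_t\in\mathcal P_t$, so it applies verbatim to give $\E[\sum_t\bar v_t]\le\algo+\tilde O(H^2S\sqrt{AT})$. You instead pivot through $W(r_t,\hat\pi_t,p)$: stability under the true transition $p$ delivers the $\algo+(e-1)\eta H^3T$ part, and the remaining optimism gap $\hat v_t-W(r_t,\hat\pi_t,p)$ is handled by the simulation lemma followed by a \emph{second} stability step (to move the bonus expectation from $\hat\pi_t$ back to $\pi_t$) and the standard pigeonhole bound. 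The paper's route is more economical---a single stability application and direct black-box reuse of the existing UCRL lemma---while yours separates the stability and confidence-radius contributions more explicitly and has the minor conceptual advantage that both stability steps run under a fixed transition $p$ rather than the $r_0$-dependent $\hat P_t$. For your Step~3 to go through cleanly you are implicitly using that the bonuses $b_t$ are frozen at epoch start and hence independent of the current epoch's perturbation $r_0$; this holds in FPOP because $\mathcal P_t$ is only updated at epoch boundaries and $r_0$ is resampled there, but it is worth making this independence explicit.
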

\begin{proof}
Recall that
$$\tilde{v}_t=W(r_t,\pi_t,\tilde{P}_t,s_{t,1}),\quad \hat{v}_t=W(r_t,\hat{\pi}_t,\hat{P}_t,s_{t,1}).$$
Let us also define $\bar{v}_t:=W(r_t,\pi_t,\hat{P}_t,s_{t,1})$.

Now, consider the extended value iteration process. First, observe that $w_{h+1}(s')$ is determined by $\bar{P}$ and $r_0(s,a,h+1:H)$. Let us use $Q_h(s,a)$ as a shorthand for $r_{1:t-1}(s,a)+\sum_{s'}P^*(s'|s,a)w_{h+1}(s')$, where $w$ is computed on $r_{0:t-1}$; similarly let $\hat{Q}_h(s,a)$ as a shorthand for $r_{1:t}(s,a)+\sum_{s'}P^*(s'|s,a)w_{h+1}(s')$, where $w$ is computed on $r_{0:t}$. We can write $\Pr\left[\pi_t(s,h)=a\right]$ as
\begin{align*}
\E_{r_0(s,-a,h),r_0(s,\cdot,h+1:H)}\left[1-F\left(\max_{a'\neq a}(Q_h(s,a')+r_0(s,a',h))-Q_h(s,a)\right)\right].
\end{align*}
Similarly
\begin{align*}
\Pr\left[\hat{\pi}_t(s,h)=a\right]=\E_{r_0(s,-a,h),r_0(s,\cdot,h+1:H)}\left[1-F\left(\max_{a'\neq a}(\hat{Q}_h(s,a')+r_0(s,a',h))-\hat{Q}_h(s,a)\right)\right].
\end{align*}
Observe that $0\le \hat{Q}_h(s,a)-{Q}_h(s,a)\le H-h+1$. It follows from Fact 1 that
\begin{align*}
    \frac{\Pr\left[\hat{\pi}_t(s,h)=a\right]}{\Pr\left[\pi_t(s,h)=a\right]}\in\left[e^{-\eta H},~e^{\eta H}\right].
\end{align*}
Using the same argument for value iteration, we can show that
\begin{align*}
    \E\left[\hat{v}_t\right]&=\E\left[W(r_t,\hat{\pi}_t,\hat{P}_t,s_{t,1})\right]\\
    &\le e^{\eta H^2}\E\left[W(r_t,\pi_t,\hat{P}_t,s_{t,1})\right]\\
    &\le \E\left[\bar{v}_t\right]+(e-1)\eta H^3,
\end{align*}
where we used the identity that $e^x\le 1+(e-1)(x-1)$ for $x\in[0,1]$.

In the proof of Lemma 5~\cite{jaksch2010near}, the only property of $\hat{P}_t$ that is used is $\hat{P}_t\in\mathcal{P}_t$. Since $\tilde{P}_t\in\mathcal{P}_t$ as well, from the same proof it follows that
\begin{align*}
\E\left[\sum_{t=1}^T \bar{v}_t\right]&\le \algo+HS\sqrt{2T\ln\frac{H}{\delta}}+2\delta HT+(\sqrt{2}+1)H^2S\sqrt{TA\ln\frac{SAT}{\delta}}\\
&\le \algo + \tilde{O}\left(H^2S\sqrt{AT}\right).
\end{align*}

\end{proof}

\section{Proof of Theorem 2}
\begin{proof}
First, without loss of generality, assume that $T>H^2SA$~\footnote{Otherwise, $HT\le H^2\sqrt{SAT}$, so the regret bound holds trivially.}.

Let us state a lemma from the original FPOP analysis~\cite{neu2012adversarial}, which has a similar flavor to the ``be-the-leader'' lemma.
\begin{lemma}[Lemma 2~\cite{neu2012adversarial}]
\label{lem:bpl}
\begin{equation*}
    \opt \le \sum_{t=1}^T \E[\hat{v}_t]+\delta HT+SA\log\left(\frac{8T}{SA}\right)\frac{H\ln(SA)+H}{\eta}.
\end{equation*}
\end{lemma}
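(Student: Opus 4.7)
My plan is to follow the classical Be-the-Leader (BTL) template, adapted to respect FPOP's epoch structure. Within a single epoch both $\mathcal{P}_i$ and the perturbation $r_0^{(i)}$ are frozen, so a standard within-epoch BTL telescoping yields a clean per-epoch inequality; the additional work is to control the discontinuity at each epoch boundary, where $r_0$ is resampled and $\mathcal{P}_i$ shrinks. I will show that the boundary cost reduces to the expected maximum of one fresh per-epoch perturbation, bounded via Fact~\ref{fact:sup-of-exp} by $H(1+\ln(SA))/\eta$, and multiplying by the UCRL doubling bound $M\le SA\log(8T/SA)$ on the number of epochs produces exactly the overhead in the lemma.

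Concretely, for epoch $i$ with episodes $\tau_i=[t_1^i,t_2^i]$, I define
\begin{equation*}
\Phi_t^{(i)} := \max_{\pi,P\in\mathcal{P}_i} W\bigl(r_0^{(i)}+r_{1:t},\pi,P\bigr), \qquad t\in[t_1^i-1,t_2^i].
\end{equation*}
Inside the epoch, $(\hat\pi_t,\hat P_t)$ attains this maximum, and since it is also feasible for the max defining $\Phi_{t-1}^{(i)}$, the one-step BTL calculation gives $\Phi_t^{(i)}-\Phi_{t-1}^{(i)}\le W(r_t,\hat\pi_t,\hat P_t)=\hat v_t$, which telescopes to $\sum_{t\in\tau_i}\hat v_t\ge A_i-B_i$ with $A_i:=\Phi_{t_2^i}^{(i)}$ and $B_i:=\Phi_{t_1^i-1}^{(i)}=\Phi_{t_2^{i-1}}^{(i)}$. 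Setting $A_0:=0$ and summing over the $M$ epochs,
\begin{equation*}
\sum_{t=1}^T\hat v_t \;\ge\; A_M + \sum_{i=1}^M (A_{i-1}-B_i).
\end{equation*}
Let $\mathcal{G}$ be the good event that the true transition $P^{\star}$ lies in every $\mathcal{P}_i$; the standard UCRL confidence-set concentration gives $\Pr(\mathcal{G}^c)\le\delta$. On $\mathcal{G}$, $(\pi^*,P^{\star})$ is feasible in the maximization defining $A_M$, so $A_M\ge V^{\pi^*}(r_0^{(M)})+V^{\pi^*}(r_{1:T})\ge\opt$, using $V^{\pi^*}(r_0^{(M)})\ge 0$.

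It remains to control the boundary discrepancies $A_{i-1}-B_i$. Both quantities are maxima of $W(r_0+r_{1:t_2^{i-1}},\cdot,\cdot)$, with $(r_0,\mathcal{P})$ equal to $(r_0^{(i-1)},\mathcal{P}_{i-1})$ and $(r_0^{(i)},\mathcal{P}_i)$ respectively. Because $\mathcal{P}_i\subseteq\mathcal{P}_{i-1}$, restricting to the smaller set only decreases the max, and the identity $\max f-\max g\le\max(f-g)$ then causes the entire $r_{1:t_2^{i-1}}$ contribution to cancel:
\begin{equation*}
B_i - A_{i-1} \le \max_{\pi,P\in\mathcal{P}_{i-1}} W\bigl(r_0^{(i)}-r_0^{(i-1)},\pi,P\bigr) \le \sum_{h=1}^H \max_{s,a} r_0^{(i)}(s,a,h),
\end{equation*}
where the last step uses $W(r_0^{(i-1)},\pi,P)\ge 0$; the same bound covers $i=1$ directly. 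Fact~\ref{fact:sup-of-exp} gives $\E\bigl[\sum_h\max_{s,a}r_0^{(i)}(s,a,h)\bigr]\le H(1+\ln(SA))/\eta$, the UCRL doubling rule gives $M\le SA\log(8T/SA)$ deterministically, and $\opt\le HT$ contributes at most $\delta HT$ on $\mathcal{G}^c$. Assembling these pieces produces the claimed inequality.

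The main obstacle is this cross-boundary step: a priori $A_{i-1}$ and $B_i$ each contain the cumulative reward $W(r_{1:t_2^{i-1}},\cdot,\cdot)$, which can be as large as $HT$, so a naive triangle inequality would be useless. The crux is that this large term appears identically on both sides of the difference and so cancels exactly, leaving only the bounded perturbation discrepancy $r_0^{(i)}-r_0^{(i-1)}$, which further collapses to a one-sided bound by non-negativity of $r_0^{(i-1)}$.
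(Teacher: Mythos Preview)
The paper does not actually prove this statement: it is quoted verbatim as ``Lemma 2 in \cite{neu2012adversarial}'' and used as a black box in the proof of Theorem~2. So there is no in-paper proof to compare against.

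That said, your argument is correct and is precisely the standard epoch-aware Be-the-Leader derivation one would expect from the original FPOP analysis. The within-epoch telescoping $\Phi_t^{(i)}-\Phi_{t-1}^{(i)}\le \hat v_t$ is valid because $W(\cdot,\pi,P)$ is linear in the reward and $(\hat\pi_t,\hat P_t)$ attains the maximum defining $\Phi_t^{(i)}$ while remaining feasible for $\Phi_{t-1}^{(i)}$. The cross-boundary step is also sound: using $\mathcal{P}_i\subseteq\mathcal{P}_{i-1}$ to enlarge the feasible set for $B_i$, then applying $\max f-\max g\le \max(f-g)$ together with the linearity of $W$ indeed cancels the large cumulative term $r_{1:t_2^{i-1}}$, and nonnegativity of $r_0^{(i-1)}$ reduces the residual to $\sum_h\max_{s,a}r_0^{(i)}(s,a,h)$, whose expectation is controlled by Fact~\ref{fact:sup-of-exp}. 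Finally, the deterministic epoch bound $M\le SA\log(8T/SA)$ from the UCRL doubling schedule lets you take expectations cleanly (pre-sampling $r_0^{(1)},\dots,r_0^{(M_{\max})}$ and using $X_i\ge 0$), and the $\delta HT$ term for $\mathcal{G}^c$ is handled via $\opt\le HT$ and $\hat v_t\ge 0$. Nothing is missing.
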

Let us choose $\eta=\sqrt{\frac{SA}{H^2T}}$ and $\delta=1/(HT)$. We can see that $\eta<1/(H^2)$. Then by Lemma~\ref{lem:stablenew} and~\ref{lem:bpl},
\begin{align*}
    \opt &\le \algo+\tilde{O}\left(H^2S\sqrt{AT}\right)+SA\log\left(\frac{8T}{SA}\right)\frac{H\ln(SA)+H}{\eta}+(e-1)\eta H^3T\\
    &= \algo+\tilde{O}\left(H^2S\sqrt{AT}\right)+H^2\sqrt{SAT}\cdot\left[\log\left(\frac{8T}{SA}\right)(\ln(SA)+1)+e-1\right]\\
    &= \algo+\tilde{O}\left(H^2S\sqrt{AT}\right).
\end{align*}
In other words, $ \Reg(T) \le \tilde{O}\left(H^2S\sqrt{AT}\right).$
\end{proof}

\end{document}